\newcommand{\cmark}{\ding{51}}
\newcommand{\xmark}{\ding{55}}
\pgfplotsset{compat=newest}
\title{Augmentation-Aware Self-Supervision\\for Data-Efficient GAN Training}
\author{
    \textbf{Liang Hou}$^{1,3,4}$,
    \textbf{Qi Cao}$^{1}$,
    \textbf{Yige Yuan}$^{1,3}$,
    \textbf{Songtao Zhao}$^{4}$,
    \textbf{Chongyang Ma}$^{4}$,\\
    \textbf{Siyuan Pan}$^{4}$,
    \textbf{Pengfei Wan}$^{4}$,
    \textbf{Zhongyuan Wang}$^{4}$,
    \textbf{Huawei Shen}$^{1,3}$,
    \textbf{Xueqi Cheng}$^{2,3}$\\
        $^{1}$CAS Key Laboratory of AI Safety and Security,\\
        Institute of Computing Technology, Chinese Academy of Sciences\\
        $^{2}$CAS Key Laboratory of Network Data Science and Technology,\\
        Institute of Computing Technology, Chinese Academy of Sciences\\
        $^{3}$University of Chinese Academy of Sciences\quad $^{4}$Kuaishou Technology\\
        \texttt{lianghou96@gmail.com}
}
\begin{document}

\maketitle

\begin{abstract}
  Training generative adversarial networks (GANs) with limited data is challenging because the discriminator is prone to overfitting. Previously proposed differentiable augmentation demonstrates improved data efficiency of training GANs. However, the augmentation implicitly introduces undesired invariance to augmentation for the discriminator since it ignores the change of semantics in the label space caused by data transformation, which may limit the representation learning ability of the discriminator and ultimately affect the generative modeling performance of the generator. To mitigate the negative impact of invariance while inheriting the benefits of data augmentation, we propose a novel augmentation-aware self-supervised discriminator that predicts the augmentation parameter of the augmented data. Particularly, the prediction targets of real data and generated data are required to be distinguished since they are different during training. We further encourage the generator to adversarially learn from the self-supervised discriminator by generating augmentation-predictable real and not fake data. This formulation connects the learning objective of the generator and the arithmetic $-$ harmonic mean divergence under certain assumptions. We compare our method with state-of-the-art (SOTA) methods using the class-conditional BigGAN and unconditional StyleGAN2 architectures on data-limited CIFAR-10, CIFAR-100, FFHQ, LSUN-Cat, and five low-shot datasets. Experimental results demonstrate significant improvements of our method over SOTA methods in training data-efficient GANs.\footnote{Our code is available at \url{https://github.com/liang-hou/augself-gan}.}
\end{abstract}

\section{Introduction}

Generative adversarial networks (GANs)~\cite{NIPS2014_5ca3e9b1} have achieved great progress in synthesizing diverse and high-quality images in recent years~\cite{brock2018large,Karras_2019_CVPR,Karras_2020_CVPR,NEURIPS2021_076ccd93,pmlr-v162-hou22a}.
However, the generation quality of GANs depends heavily on the amount of training data~\cite{NEURIPS2020_55479c55,NEURIPS2020_8d30aa96}.
In general, the decrease of training samples usually yields a sharp decline in both fidelity and diversity of the generated images~\cite{9319516,zhao2020image}.
This issue hinders the wide application of GANs due to the fact of insufficient data in real-world applications.
For instance, it is valuable to imitate the style of an artist whose paintings are limited.
GANs typically consist of a generator that is designed to generate new data and a discriminator that guides the generator to recover the real data distribution.
The major challenge of training GANs under limited data is that the discriminator is prone to overfitting~\cite{NEURIPS2020_55479c55,NEURIPS2020_8d30aa96}, and therefore lacks generalization to teach the generator to learn the underlying real data distribution.

In order to alleviate the overfitting issue, recent researches have suggested a variety of approaches, mainly from the perspectives of training data~\cite{NEURIPS2020_8d30aa96}, loss functions~\cite{Tseng_2021_CVPR}, and network architectures~\cite{liu2021towards}.
Among them, data augmentation-based methods have gained widespread attention due to its simplicity and extensibility.
Specifically, DiffAugment~\cite{NEURIPS2020_55479c55} introduced differentiable augmentation techniques for GANs, in which both real and generated data are augmented to supplement the training set of the discriminator. 
However, this straightforward augmentation method overlooks augmentation-related semantic information, as it solely augments the domain of the discriminator while neglecting the range.
Such a practice might introduces an inductive bias that potentially forces the discriminator to remain invariant to different augmentations~\cite{NEURIPS2021_94130ea1}, which could limit the representation learning of the discriminator and subsequently affect the generation performance of the generator~\cite{NEURIPS2021_6cb5da35}.

In this paper, we propose a novel augmentation-aware self-supervised discriminator that predicts the augmentation parameter of augmented data with the original data as reference to address the above problem.
Meanwhile, the self-supervised discriminator is required to be distinguished between the real data and the generated data since their distributions are different during training, especially in the early stage.
The proposed discriminator can benefit the generator in two ways, implicitly and explicitly.
On one hand, the self-supervised discriminator can transfer the learned augmentation-aware knowledge to the original discriminator through parameter sharing.
On the other hand, we allow the generator to learn adversarially from the self-supervised discriminator by generating augmentation-predictable real and not fake data (\cref{eq:ass-g}).
We also theoretically analyzed the connection between this objective function and the minimization of a robust $f$-divergence divergence (the arithmetic $-$ harmonic mean divergence~\cite{taneja2008mean}).
In experiments, we show that the proposed method compares favorably to the data augmentation counterparts and other state-of-the-art (SOTA) methods on common data-limited benchmarks (CIFAR-10~\cite{krizhevsky2009learning}, CIFAR-100~\cite{krizhevsky2009learning}, FFHQ~\cite{Karras_2019_CVPR}, LSUN-Cat~\cite{yu2015lsun}, and five low-shot image generation datasets~\cite{si2011learning}) based on the class-conditional BigGAN~\cite{brock2018large} and unconditional StyleGAN2~\cite{Karras_2020_CVPR} architectures.
In addition, we carried out extensive experiments to demonstrate the effectiveness of the objective function design, the adaptability to stronger data augmentations, and the robustness of hyper-parameter selection in our method.

\section{Related Work}

In this section, we provide an overview of existing work related to training GANs in data-limited scenarios. We also discuss methodologies incorporating self-supervised learning techniques.

\subsection{GANs under Limited Training Data}
\label{sec:gan_limited_data}

Recently, researchers have become interested in freeing training GANs from the need to collect large amounts of data for adaptability in real-world scenarios.
Previous studies typically fall into two main categories.
The first one involves adopting a pre-trained GAN model to the target domain by fine-tuning partial parameters~\cite{Wang_2018_ECCV,Noguchi_2019_ICCV,Wang_2020_CVPR,mo2020freeze}. 
However, it requires external training data, and the adoption performance depends heavily on the correlation between the source and target domains.

The other one focuses on training GANs from scratch with elaborated data-efficient training strategies.
DiffAugment~\cite{NEURIPS2020_55479c55} utilized differentiable augmentation to supplement the training set to prevent discriminator from overfitting in limited data regimes.
Concurrently, ADA~\cite{NEURIPS2020_8d30aa96} introduced adaptive data augmentation with a richer set of augmentation categories.
APA~\cite{NEURIPS2021_b534ba68} adaptively augmented the real data with the most plausible generated data.
LeCam-GAN~\cite{Tseng_2021_CVPR} proposed adaptive regularization for the discriminator and showed a connection to the Le Cam divergence~\cite{le2012asymptotic}.
\cite{NEURIPS2021_af4f00ca} discovered that sparse sub-network (lottery tickets)~\cite{chen2021gans} and feature-level adversarial augmentation could offer orthogonal gains to data augmentation methods.
InsGen~\cite{NEURIPS2021_4e0d67e5} improved the data efficiency of training GANs by incorporating instance discrimination tasks to the discriminator.
MaskedGAN~\cite{NEURIPS2022_0efcb188} employed masking in the spatial and spectral domains to alleviate the discriminator overfitting issue.
GenCo~\cite{Cui_Huang_Luo_Zhang_Zhan_Lu_2022} discriminated samples from multiple views with weight-discrepancy and data-discrepancy mechanisms.
FreGAN~\cite{NEURIPS2022_d804cef4} focused on discriminating between real and fake samples in the high-frequency domain.
DigGAN~\cite{NEURIPS2022_ce26d216} constrains the discriminator gradient gap between real and generated data.
FastGAN~\cite{liu2021towards} designed a lightweight generator architecture and observed that a self-supervised discriminator could enhance low-shot generation performance.
Our method falls into the second category, supplementing data augmentation-based GANs and can be also applied to other methods.

\begin{figure}[t]
\captionsetup[subfloat]{labelformat=empty}
\centering
\subfloat[Original data]{
\includegraphics[width=0.235\textwidth]{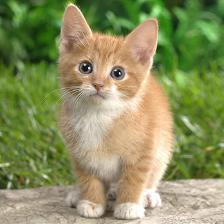}}
\subfloat[$\boldsymbol{\omega}_\mathrm{color}=(0.2,0.3,0.5)$]{
\includegraphics[width=0.235\textwidth]{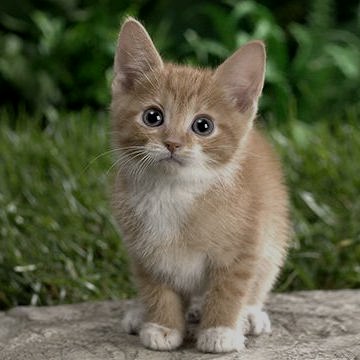}}
\subfloat[$\boldsymbol{\omega}_\mathrm{translation}=(0.1,0.8)$]{
\includegraphics[width=0.235\textwidth]{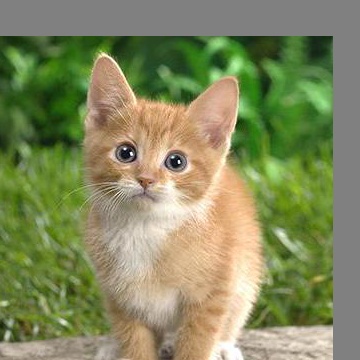}}
\subfloat[$\boldsymbol{\omega}_\mathrm{cutout}=(0.4,0.6)$]{
\includegraphics[width=0.235\textwidth]{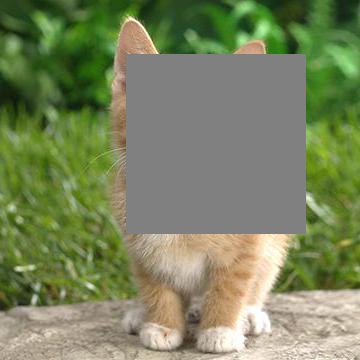}}
\caption{Examples of images with different kinds of differentiable augmentation (including the original unaugmented one) and their re-scaled corresponding augmentation parameters $\boldsymbol{\omega}\in[0,1]^d$.}
\label{fig:augself}
\end{figure}

\subsection{GANs with Self-Supervised Learning}

Self-supervised learning techniques excel at learning meaningful representations without human-annotated labels by solving pretext tasks.
Transformation-based self-supervised learning methods such as rotation recognition~\cite{gidaris2018unsupervised} have been incorporated into GANs to address catastrophic forgetting in discriminators~\cite{Chen_2019_CVPR,NEURIPS2019_d04cb95b,NEURIPS2021_6cb5da35}.
Various other self-supervised tasks have also been explored, including jigsaw puzzle solving~\cite{BAYKAL2022108244}, latent transformation detection~\cite{Patel_2021_WACV}, and mutual information maximization~\cite{Lee_2021_WACV}.
Moreover, ContraD~\cite{jeong2021training} decouples the representation learning and discrimination of the discriminator, utilizing contrastive learning for representation learning and a discriminator head for distinguishing real from fake upon the contrastive representations.
In contrast to ours, CR-GAN~\cite{Zhang2020Consistency} and ICR-GAN proposed consistency regularization for the discriminator, which corresponds to an explicit augmentation-invariant of the discriminator.
both our proposed method and SSGAN-LA~\cite{NEURIPS2021_6cb5da35} belong to adversarial self-supervised learning, they differ in the type of self-supervised signals and model inputs.
SSGAN-LA is limited to categorical self-supervision~\cite{gidaris2018unsupervised}, which is incompatible with popular augmentation-based GANs like DiffAugment~\cite{NEURIPS2020_55479c55}.
Our method is applicable for continuous self-supervision and integrates seamlessly with DiffAugment.
Furthermore, continuous self-supervision have a magnitude relationship and thus can provide more refined gradient feedback for the model to overcome overfitting in data-limited scenarios. 
Additionally, unlike SSGAN-LA, our method does not constrain the invertibility of data transformations (\cref{thm}) because it additionally take the original sample as input for the self-supervised discriminator (\cref{eq:ass-d}).

\section{Preliminaries}

In this section, we introduce the necessary concepts and preliminaries for completeness of the paper.

\subsection{Generative Adversarial Networks}

Generative adversarial networks (GANs)~\cite{NIPS2014_5ca3e9b1} typically contain a generator $G:\mathcal{Z}\to\mathcal{X}$ that maps a low-dimensional latent code $\mathbf{z}\in\mathcal{Z}$ endowed with a tractable prior $p(\mathbf{z})$, e.g., multivariate normal distribution $\mathcal{N}(\mathbf{0},\mathbf{I})$, to a high-dimensional data point $\mathbf{x}\in\mathcal{X}$, which induces a generated data distribution (density) $p_G(\mathbf{x})=\int_\mathcal{Z} p(\mathbf{z})\delta(\mathbf{x}-G(\mathbf{z}))\mathrm{d}\mathbf{z}$ with the Dirac delta distribution $\delta(\cdot)$, and also contain a discriminator $D:\mathcal{X}\to\mathbb{R}$ that is required to distinguish between the real data sampled from the underlying data distribution (density) $p_\mathrm{data}(\mathbf{x})$ and the generated ones.
The generator attempts to fool the discriminator to eventually recover the real data distribution, i.e., $p_G(\mathbf{x})=p_\mathrm{data}(\mathbf{x})$.
Formally, the loss functions for the discriminator and the generator can be formulated as follows:
\begin{IEEEeqnarray}{rCl}
    \mathcal{L}_D &=& \mathbb{E}_{\mathbf{x}\sim p_\mathrm{data}(\mathbf{x})}[f(D(\mathbf{x}))] + \mathbb{E}_{\mathbf{z}\sim p(\mathbf{z})}[h(D(G(\mathbf{z})))], \\ 
    \mathcal{L}_G &=& \mathbb{E}_{\mathbf{z}\sim p(\mathbf{z})}[g(D(G(\mathbf{z})))].
\end{IEEEeqnarray}

Different real-valued functions $f$, $h$, and $g$ correspond to different variants of GANs~\cite{NIPS2016_cedebb6e}.
For example, the minimax GAN~\cite{NIPS2014_5ca3e9b1} can be constructed by setting $f(x)=-\log(\sigma(x))$ and $h(x)=-g(x)=-\log(1-\sigma(x))$ with the sigmoid function $\sigma(x)=1/(1+\exp(-x))$.
In this study, we follow the practices of DiffAugment~\cite{NEURIPS2020_55479c55} to adopt the hinge loss~\cite{lim2017geometric}, i.e., $f(x)=h(-x)=\max(0,1-x)$ and $g(x)=-x$, for experiments based on BigGAN~\cite{brock2018large} and the log loss~\cite{NIPS2014_5ca3e9b1}, i.e., $f(x)=g(x)=-\log(\sigma(x))$ and $h(x)=-\log(1-\sigma(x))$, for experiments based on StyleGAN2~\cite{Karras_2020_CVPR}.

\pgfplotstableread{
0 0.7833 0.0015 0.7917 0.0013 0.4378 0.0011 0.5288 0.0016
1 0.7356 0.0026 0.7381 0.0014 0.4591 0.0022 0.4724 0.0020
2 0.7102 0.0009 0.7215 0.0008 0.4357 0.0013 0.4479 0.0028
}\dataset

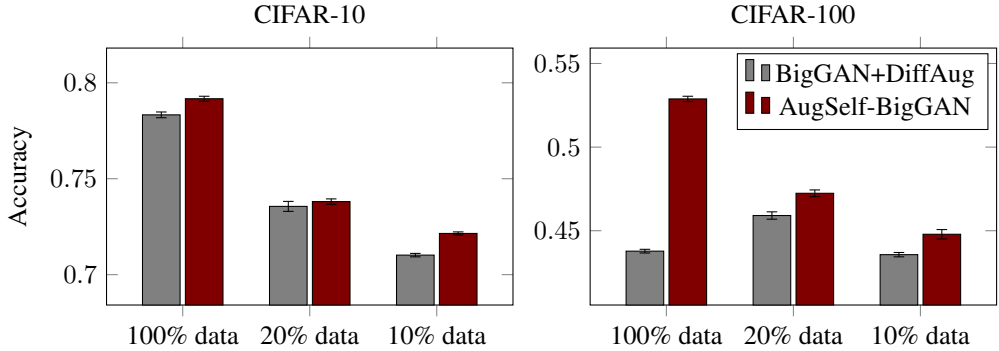
\begin{figure}[t]
\centering
\begin{subfigure}{0.5\textwidth}
\begin{tikzpicture}
    \begin{axis}[ybar,
        width=\textwidth,
        height=5cm,
        enlargelimits=0.3,
        title={CIFAR-10},
        ylabel={Accuracy},
        bar width=0.5cm,
        xtick=data,
        xticklabels = {
            100\% data,
            20\% data,
            10\% data
        },
        ]
    \addplot[draw=black,fill=gray,error bars/.cd,y dir=both,y explicit] 
    table[x index=0,y index=1,y error index=2] \dataset;
    \addplot[draw=black,fill=black!50!red,error bars/.cd,y dir=both,y explicit] 
    table[x index=0,y index=3,y error index=4] \dataset;
    \end{axis}
\end{tikzpicture}
\end{subfigure}\hfill
\begin{subfigure}{0.5\textwidth}
\begin{tikzpicture}
    \begin{axis}[ybar,
        width=\textwidth,
        height=5cm,
        enlargelimits=0.3,
        title={CIFAR-100},
        bar width=0.5cm,
        xtick=data,
        xticklabels = {
            100\% data,
            20\% data,
            10\% data
        },
        ]
    \addplot[draw=black,fill=gray,error bars/.cd,y dir=both,y explicit] 
    table[x index=0,y index=5,y error index=6] \dataset;
    \addplot[draw=black,fill=black!50!red,error bars/.cd,y dir=both,y explicit] 
    table[x index=0,y index=7,y error index=8] \dataset;
    \legend{BigGAN+DiffAug,AugSelf-BigGAN}
    \end{axis}
\end{tikzpicture}
\end{subfigure}
\caption{Comparison of representation learning ability of discriminator between BigGAN + DiffAugment and our AugSelf-BigGAN on CIFAR-10 and CIFAR-100 using linear logistic regression.}
\label{fig:accuracy}
\end{figure}

\subsection{Differentiable Augmentation for GANs}

DiffAugment~\cite{NEURIPS2020_55479c55} introduces differentiable augmentation $T:\mathcal{X}\times\Omega\to\hat{\mathcal{X}}$ parameterized by a randomly-sampled parameter $\boldsymbol{\omega}\in\Omega$ with a prior $p(\boldsymbol{\omega})$ for data-efficient GAN training.
The parameter $\boldsymbol{\omega}$ determines exactly how to transfer a sample $\mathbf{x}$ to an augmented one $\hat{\mathbf{x}}\in\hat{\mathcal{X}}$ for the discriminator.
After manually re-scaling (for $\boldsymbol{\omega}\in[0,1]^d$), the parameters of all three kinds of differentiable augmentation used in DiffAugment for 2D images can be expressed as follows:
\begin{itemize}
    \item color: $\boldsymbol{\omega}_\mathrm{color}=(\lambda_\mathrm{brightness}, \lambda_\mathrm{saturation}, \lambda_\mathrm{contrast})\in[0,1]^3$;
    \item translation: $\boldsymbol{\omega}_\mathrm{translation}=(x_\mathrm{translation}, y_\mathrm{translation})\in[0,1]^2$;
    \item cutout: $\boldsymbol{\omega}_\mathrm{cutout}=(x_\mathrm{offset}, y_\mathrm{offset})\in[0,1]^2$.
\end{itemize}
\cref{fig:augself} illustrates the augmentation operations and their parameters.
Formally, the loss functions for the discriminator and the generator of GANs with DiffAugment are defined as follows:
\begin{IEEEeqnarray}{rCl}
    \mathcal{L}_D^{\mathrm{da}} &=& \mathbb{E}_{\mathbf{x}\sim p_\mathrm{data}(\mathbf{x}),\boldsymbol{\omega}\sim p(\boldsymbol{\omega})}[f(D(T(\mathbf{x};\boldsymbol{\omega})))] + \mathbb{E}_{\mathbf{z}\sim p(\mathbf{z}),\boldsymbol{\omega}\sim p(\boldsymbol{\omega})}[h(D(T(G(\mathbf{z});\boldsymbol{\omega})))], \\ 
    \mathcal{L}_G^{\mathrm{da}} &=& \mathbb{E}_{\mathbf{z}\sim p(\mathbf{z}),\boldsymbol{\omega}\sim p(\boldsymbol{\omega})}[g(D(T(G(\mathbf{z});\boldsymbol{\omega})))],
\end{IEEEeqnarray}
where $\boldsymbol{\omega}$ can represent any combination of these parameters.
We choose all augmentations by default, which means  augmentation color, translation, and cutout are adopted for each image sequentially.

\section{Method}

Data augmentation for GANs allows the discriminator to distinguish a single sample from multiple perspectives by transforming it into various augmented samples according to different augmentation parameters. 
However, it overlooks the differences in augmentation intensity, such as color contrast and translation magnitude, leading the discriminator to implicitly maintain invariance to these varying intensities. 
The invariance may limit the representation learning ability of the discriminator because it loses augmentation-related information (e.g., color and position)~\cite{NEURIPS2021_94130ea1}.
\cref{fig:accuracy} confirms the impact of this point on the discriminator representation learning task~\cite{Chen_2019_CVPR}. 
We argue that a discriminator that captures comprehensive representations contributes to better convergence of the generator~\cite{NEURIPS2021_9219adc5,Kumari_2022_CVPR}. 
Moreover, data augmentation may lead to augmentation leaking in generated data, when using specific data augmentations such as random 90-degree rotations~\cite{NEURIPS2020_8d30aa96,NEURIPS2021_6cb5da35}. Therefore, our goal is to eliminate the unnecessary potential inductive bias (invariance to augmentations) for the discriminator while preserving the benefits of data augmentation for training data-efficient GANs.

\usetikzlibrary{positioning}
\usetikzlibrary{shapes}
\usetikzlibrary{calc}

\begin{figure}[t]
\centering

\tikzset{arrow/.style={-stealth, very thick, draw=black}}
\tikzset{node/.style={scale=1.5}}

\begin{tikzpicture}[ampersand replacement=\&]
     
    \node[] at (0,0) (Z) {\LARGE $ \mathbf{z} $};		 
    \node[trapezium, draw, rotate=90, minimum height=1cm, fill=gray, trapezium stretches body] at (8em,0) (G) {\rotatebox{-90}{\LARGE $G$}};

    \node[] at (16em,0) (GZ) {\LARGE $ G(\mathbf{z}) $};
    \node[] at (16em,-4em) (X) {\LARGE $ \mathbf{x} $};	
    \node[] at (16em,4em) (TGZ) {\LARGE $ T(G(\mathbf{z});\boldsymbol{\omega}) $};
    \node[] at (16em,8em) (TX) {\LARGE $ T(\mathbf{x};\boldsymbol{\omega}) $};		 
    
    \node[trapezium, draw, rotate=-90, minimum height=1cm, fill=gray, trapezium stretches body] at (24em,-2em) (Do) {\rotatebox{90}{\LARGE $\phi$}};
    
    \node[trapezium, draw, rotate=-90, minimum height=1cm, fill=gray, trapezium stretches body] at (24em,6em) (D) {\rotatebox{90}{\LARGE $\phi$}};
    
    \node[rectangle, draw, minimum height=0.5cm, fill=gray] at (28em,6em) (P) {\LARGE $\psi$};
    
    \node[rectangle, draw, minimum height=1cm, fill=black!50!red] at (28em,2em) (R) {\LARGE $\varphi$};
    
    \node[] at (36em,6em) (O) {\LARGE $+/-$};
    
    \node[] at (36em,2em) (W) {\LARGE $\boldsymbol{\omega}^+/\boldsymbol{\omega}^-$};
     
    \path[arrow] (Z) edge (G)
    (G) edge (GZ)
    (GZ) edge (TGZ)
    (D) edge (P)
    (P) edge (O)
    (R) edge (W)
    ;

    \draw [arrow](TX) -- ($(D.south) + (-2em,+2em)$) |- (D);
    \draw [arrow](TGZ) -- ($(D.south) + (-2em,-2em)$) |- (D);
    \draw [arrow](X) -- ($(Do.south) + (-2em,-2em)$) |- (Do);
    \draw [arrow](GZ) -- ($(Do.south) + (-2em,+2em)$) |- (Do);
    \draw [arrow](D) -- ($(D) + (0,-4em)$) -- (R);
    \draw [arrow](Do) -- ($(Do) + (0,4em)$) -- (R);

    \path[draw,very thick] ($(X.west) + (-4em,+3.5em)$) arc(-90:90:0.5em);
    \path[draw,very thick] (X) -- ($(X.west) + (-4em,0)$);
    \path[draw,very thick] ($(X.west) + (-4em,0)$) -- ($(X.west) + (-4em,+3.5em)$);
    \draw [arrow]($(X.west) + (-4em,+4.5em)$) |- (TX);
\end{tikzpicture}
\caption{Diagram of AugSelf-GAN. The original augmentation-based discriminator is $D(T(\cdot))=\psi(\phi(T(\cdot)))$. The augmentation-aware self-supervised discriminator is $\hat{D}(T(\cdot),\cdot)=\varphi(\phi(T(\cdot))-\phi(\cdot))$, where $\varphi$ is our newly introduced linear layer with negligible additional parameters.}
\label{fig:model}
\end{figure}
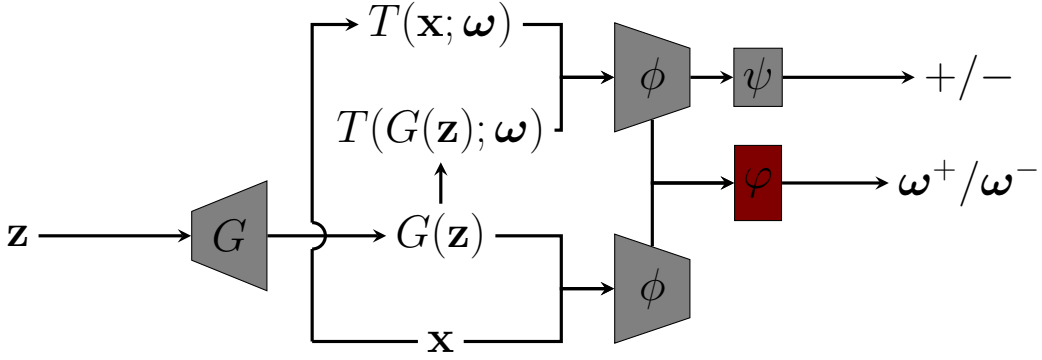

To achieve this goal, we propose a novel augmentation-aware self-supervised discriminator $\hat{D}:\hat{\mathcal{X}}\times\mathcal{X}\to\Omega^+\cup\Omega^-$ that predicts the augmentation parameter and authenticity of the augmented data given the original data as reference.
Distinguishing between the real data and the generated data with different self-supervision is because they are different during training, especially in the early stage.
Specifically, the predictive targets of real data and generated data are represented as $\boldsymbol{\omega}^+\in\Omega^+$ and $\boldsymbol{\omega}^-\in\Omega^-$, respectively.
They are constructed from the augmentation parameter $\boldsymbol{\omega}$ with different transformations, i.e., $\boldsymbol{\omega}^+=-\boldsymbol{\omega}^-=\boldsymbol{\omega}$.
Since the augmentation parameter is  a continuous vector, we use mean squared error loss  to regress it.
The proposed method combines continuous self-supervised signals with real-vs-fake discrimination signals, thus can be considered as soft-label augmentation~\cite{NEURIPS2021_6cb5da35}.
Comparison with self-supervision that does not distinguish between real and fake is referred to \cref{tab:sst} in \cref{sec:as}.
Notice that the predictive targets (augmentations) can be a subset of performed augmentations (see \cref{tab:sss} in \cref{sec:as} for comparison).
Mathematically, the loss function for the augmentation-aware self-supervised discriminator is formulated as the following:
\begin{IEEEeqnarray}{rCl}
\label{eq:ass-d}
    \mathcal{L}^{\mathrm{ss}}_{\hat{D}} = \mathbb{E}_{\mathbf{x},\boldsymbol{\omega}}\left[\|\hat{D}(T(\mathbf{x};\boldsymbol{\omega}),\mathbf{x})-\boldsymbol{\omega}^+\|_2^2\right] + \mathbb{E}_{\mathbf{z},\boldsymbol{\omega}}\left[\|\hat{D}(T(G(\mathbf{z});\boldsymbol{\omega}),G(\mathbf{z}))-\boldsymbol{\omega}^-\|_2^2\right].
\end{IEEEeqnarray}
In our implementations, the proposed self-supervised discriminator $\hat{D}=\varphi\circ\phi$ shares the backbone $\phi:\mathcal{X}\to \mathbb{R}^d$ with the original discriminator $D=\psi\circ\phi$ except the output linear layer $\varphi:\mathbb{R}^d\to \Omega^+\cup\Omega^-$.
This parameter-sharing design not only improves the representation learning ability of the original discriminator but also saves the number of parameters in our model compared to the base model, e.g., $0.04\%$ more parameters in BigGAN and $0.01\%$ in StyleGAN2.
More specifically, the self-supervised discriminator predicts the target based on the difference between learned representations of the augmented data and the original data, i.e., $\hat{D}(T(\mathbf{x};\boldsymbol{\omega}),\mathbf{x})=\varphi(\phi(T(\mathbf{x};\boldsymbol{\omega}))-\phi(\mathbf{x}))$ (see \cref{tab:archtecture} in \cref{sec:as} for comparison with other architectures).
The philosophy behind our design is that the backbone $\phi$ should capture rich (which necessitates the design of a simple head $\varphi$) and linear (inspiring us to perform subtraction on the features) representations.

In order for the generator to directly benefit from the self-supervision of data augmentation, we establish a novel adversarial game between the augmentation-aware self-supervised discriminator and the generator with the objective function for the generator defined as follows:
\begin{IEEEeqnarray}{rCl}
\label{eq:ass-g}
\mathcal{L}^{\mathrm{ss}}_G = \mathbb{E}_{\mathbf{z},\boldsymbol{\omega}}\left[\|\hat{D}(T(G(\mathbf{z});\boldsymbol{\omega}),G(\mathbf{z}))-\boldsymbol{\omega}^+\|_2^2\right] - \mathbb{E}_{\mathbf{z},\boldsymbol{\omega}}\left[\|\hat{D}(T(G(\mathbf{z});\boldsymbol{\omega}),G(\mathbf{z}))-\boldsymbol{\omega}^-\|_2^2\right].
\end{IEEEeqnarray}

The objective function is actually the combination of the non-saturating loss (regarding the generated data as real, $\min_G \mathbb{E}_{\mathbf{z},\boldsymbol{\omega}}[\|\hat{D}(T(G(\mathbf{z});\boldsymbol{\omega}),G(\mathbf{z}))-\boldsymbol{\omega}^+\|_2^2]$) and the saturating loss (reversely optimizing the objective function of the discriminator, $\max_G \mathbb{E}_{\mathbf{z},\boldsymbol{\omega}}[\|\hat{D}(T(G(\mathbf{z});\boldsymbol{\omega}),G(\mathbf{z}))-\boldsymbol{\omega}^-\|_2^2]$) (see \cref{tab:loss} in \cref{sec:as} for ablation). 
Intuitively, the non-saturating loss encourages the generator to produce augmentation-predictable data, facilitating fidelity but reducing diversity. 
Conversely, the saturating loss strives for the generator to avoid generating augmentation-predictable data, promoting diversity at the cost of fidelity.
We will elucidate in \cref{sec:analysis} how this formalization assists the generator in matching the fidelity and diversity of real data, ultimately leading to an accurate approximation of the target data distribution.

The total objective functions for the original discriminator, the augmentation-aware self-supervised discriminator, and the generator of our proposed method, named AugSelf-GAN, are given by:
\begin{IEEEeqnarray}{rCl}
    \min_{D,\hat{D}} \mathcal{L}_D^{\mathrm{da}} &+& \lambda_d \cdot \mathcal{L}_{\hat{D}}^{\mathrm{ss}}, \\ 
    \min_G \mathcal{L}_G^{\mathrm{da}} &+& \lambda_g \cdot \mathcal{L}_G^{\mathrm{ss}},
\end{IEEEeqnarray}
where the hyper-parameters are set as $\lambda_d=\lambda_g=1$ in experiments by default unless otherwise specified (see \cref{fig:lambda} for empirical studies).
Details of objective functions are referred to~\cref{sec:loss}.

\begin{SCfigure}
  \centering
    \begin{tikzpicture}
    \begin{axis}[
        domain=0:5,
        grid=both,
        samples=21,
        xlabel=$p(\mathbf{x})/q(\mathbf{x})$,
        ylabel=$f(p(\mathbf{x})/q(\mathbf{x}))$,
        xmin=0,
        xmax=5,
        ymin=-2,
        ymax=8,
        x=1cm,y=0.5cm,
        unit vector ratio={1.8 1},
        smooth,thick,
        legend style={fill=white,fill opacity=0.5,draw opacity=1,text opacity=1,cells={anchor=west}},
        legend pos=north west]
        \addplot[color=black!50!blue,mark=square*]{x*ln(x)};
        \addplot[color=black!50!green,mark=pentagon*]{-ln(x)};
        \addplot[color=black!50!yellow,mark=diamond*]{-(x+1)*ln((x+1)/2)+x*ln(x)};
        \addplot[color=black!50!orange,mark=triangle*]{(x-1)^2/(x+1)};
        \addplot[color=black!50!red]{(1-x)/(x+1)};
        \legend{KL,rKL,JS,LC,AHM}
    \end{axis}
    \end{tikzpicture}
    \caption{Comparison of the function $f$ in different $f$-divergences. The $f$-divergence between two probability distributions $p(\mathbf{x})$ and $q(\mathbf{x})$ is defined as $D_f(p(\mathbf{x})\|q(\mathbf{x}))=\int_\mathcal{X} q(\mathbf{x})f(p(\mathbf{x})/q(\mathbf{x}))\mathrm{d}\mathbf{x}$ with a convex function $f:\mathbb{R}_{\geq 0}\to\mathbb{R}$ satisfying $f(1)=0$. The x- and y-axis denote the input and the value of the function $f$ in the $f$-divergence. The function $f$ of the AHM divergence yields the most robust value for large inputs.}\label{fig:fdiv}
\end{SCfigure}
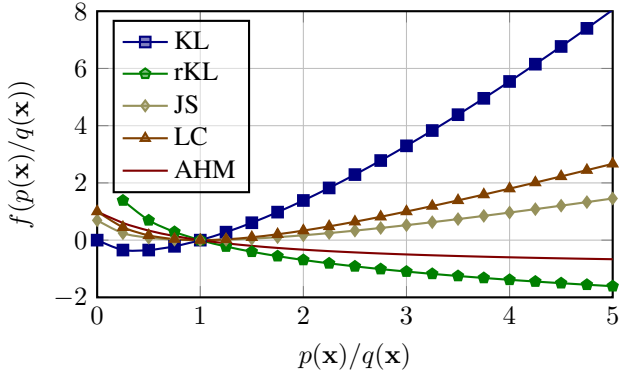

\section{Theoretical Analysis}
\label{sec:analysis}

In this section, we analyze the connection between the theoretical learning objective of AugSelf-GAN and the arithmetic $-$ harmonic mean (AHM) divergence~\cite{taneja2008mean} under certain assumptions.

\begin{restatable}{pro}{dis}
For any generator $G$ and given unlimited capacity in the function space, the optimal augmentation-aware self-supervised discriminator $\hat{D}^*$ has the form of:
\begin{IEEEeqnarray}{rCl}
    \hat{D}^*(\hat{\mathbf{x}},\mathbf{x}) = \frac{\int p_\mathrm{data}(\mathbf{x},\boldsymbol{\omega},\hat{\mathbf{x}})\boldsymbol{\omega}^+\mathrm{d}\boldsymbol{\omega}+\int p_G(\mathbf{x},\boldsymbol{\omega},\hat{\mathbf{x}})\boldsymbol{\omega}^-\mathrm{d}\boldsymbol{\omega}}{p_\mathrm{data}(\mathbf{x},\hat{\mathbf{x}})+p_G(\mathbf{x},\hat{\mathbf{x}})}
\end{IEEEeqnarray}
\end{restatable}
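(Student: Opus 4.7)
The plan is to recognize the loss $\mathcal{L}^{\mathrm{ss}}_{\hat D}$ of \cref{eq:ass-d} as a pointwise least-squares regression problem in which $\hat D$ regresses onto a soft label that takes value $\boldsymbol{\omega}^+$ on real pairs $(T(\mathbf{x};\boldsymbol{\omega}),\mathbf{x})$ and $\boldsymbol{\omega}^-$ on generated pairs $(T(G(\mathbf{z});\boldsymbol{\omega}),G(\mathbf{z}))$. Given unlimited capacity, the minimizer can be obtained pointwise for each fixed pair $(\hat{\mathbf{x}},\mathbf{x})$, so the proof reduces to identifying the conditional expectation of the target given that pair under the mixture of the real and generated joint laws.

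First I would rewrite the two expectations in \cref{eq:ass-d} as integrals against joint densities on $(\mathbf{x},\boldsymbol{\omega},\hat{\mathbf{x}})$. For the real branch, pushing the delta $\hat{\mathbf{x}}=T(\mathbf{x};\boldsymbol{\omega})$ through the integration over $\mathbf{x}$ and $\boldsymbol{\omega}$ yields a joint density $p_\mathrm{data}(\mathbf{x},\boldsymbol{\omega},\hat{\mathbf{x}})=p_\mathrm{data}(\mathbf{x})p(\boldsymbol{\omega})\delta(\hat{\mathbf{x}}-T(\mathbf{x};\boldsymbol{\omega}))$; for the generated branch, the change of variable $\mathbf{x}=G(\mathbf{z})$ similarly gives $p_G(\mathbf{x},\boldsymbol{\omega},\hat{\mathbf{x}})=p_G(\mathbf{x})p(\boldsymbol{\omega})\delta(\hat{\mathbf{x}}-T(\mathbf{x};\boldsymbol{\omega}))$. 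The loss then becomes
\begin{equation*}
\mathcal{L}^{\mathrm{ss}}_{\hat D}=\int\!\!\int\!\!\int\!\Bigl[p_\mathrm{data}(\mathbf{x},\boldsymbol{\omega},\hat{\mathbf{x}})\|\hat D(\hat{\mathbf{x}},\mathbf{x})-\boldsymbol{\omega}^+\|_2^2+p_G(\mathbf{x},\boldsymbol{\omega},\hat{\mathbf{x}})\|\hat D(\hat{\mathbf{x}},\mathbf{x})-\boldsymbol{\omega}^-\|_2^2\Bigr]\mathrm{d}\boldsymbol{\omega}\,\mathrm{d}\mathbf{x}\,\mathrm{d}\hat{\mathbf{x}}.
\end{equation*}

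Next, since $\hat D$ ranges over the unconstrained function space on $\hat{\mathcal{X}}\times\mathcal{X}$, I would minimize the inner integrand over $\boldsymbol{\omega}$ pointwise in $(\hat{\mathbf{x}},\mathbf{x})$. Differentiating with respect to the vector $\hat D(\hat{\mathbf{x}},\mathbf{x})$ and setting the gradient to zero gives
\begin{equation*}
\int p_\mathrm{data}(\mathbf{x},\boldsymbol{\omega},\hat{\mathbf{x}})\bigl(\hat D(\hat{\mathbf{x}},\mathbf{x})-\boldsymbol{\omega}^+\bigr)\mathrm{d}\boldsymbol{\omega}+\int p_G(\mathbf{x},\boldsymbol{\omega},\hat{\mathbf{x}})\bigl(\hat D(\hat{\mathbf{x}},\mathbf{x})-\boldsymbol{\omega}^-\bigr)\mathrm{d}\boldsymbol{\omega}=\mathbf{0}.
\end{equation*}
Using $\int p_\mathrm{data}(\mathbf{x},\boldsymbol{\omega},\hat{\mathbf{x}})\,\mathrm{d}\boldsymbol{\omega}=p_\mathrm{data}(\mathbf{x},\hat{\mathbf{x}})$ and the analogous identity for $p_G$, solving for $\hat D(\hat{\mathbf{x}},\mathbf{x})$ yields exactly the claimed formula, and convexity of the squared-norm integrand guarantees that this stationary point is the unique minimizer.

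The argument is essentially a calculus-of-variations exercise, so I do not expect a major obstacle; the only mildly delicate step is justifying the use of Dirac deltas when writing the joint densities (equivalently, one may avoid them by expressing the conditional expectation directly as $\hat D^*(\hat{\mathbf{x}},\mathbf{x})=\mathbb{E}[Y\mid\hat{\mathbf{x}},\mathbf{x}]$ under the real/generated mixture, then unpacking the conditional). I would therefore make clear at the outset that "unlimited capacity in the function space" is what licenses pointwise minimization and that the support $\{(\mathbf{x},\hat{\mathbf{x}}):p_\mathrm{data}(\mathbf{x},\hat{\mathbf{x}})+p_G(\mathbf{x},\hat{\mathbf{x}})>0\}$ is implicitly where $\hat D^*$ is determined, leaving $\hat D^*$ arbitrary off-support.
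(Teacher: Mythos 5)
Your proposal is correct and follows essentially the same route as the paper's own proof: rewrite both expectations as a triple integral over $(\mathbf{x},\boldsymbol{\omega},\hat{\mathbf{x}})$ against the joint densities, minimize pointwise in $(\hat{\mathbf{x}},\mathbf{x})$ by setting the gradient of the quadratic integrand to zero, and marginalize over $\boldsymbol{\omega}$ to obtain the stated form. Your added remarks on convexity and on the support where $\hat{D}^*$ is determined are sound refinements the paper leaves implicit.
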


The proofs of all theoretical results (including the following ones) are deferred in \cref{sec:proof}.

\begin{restatable}{thm}{gen}
\label{thm}
Assume that $\boldsymbol{\omega}^+=-\boldsymbol{\omega}^-=\boldsymbol{c}$ is constant, under the optimal self-supervised discriminator $\hat{D}^*$, optimizing the self-supervised task for the generator $G$ is equivalent to:
\begin{IEEEeqnarray}{rCl}
    \min_G 4c\cdot M_\mathrm{AH}(p_\mathrm{data}(\mathbf{x},\hat{\mathbf{x}})\|p_G(\mathbf{x},\hat{\mathbf{x}})),
\end{IEEEeqnarray}
where $c=\|\boldsymbol{c}\|_2^2$ is constant and $M_\mathrm{AH}$ is the arithmetic $-$ harmonic mean divergence~\cite{taneja2008mean}, of which the minimum is achieved if and only if $p_G(\mathbf{x},\hat{\mathbf{x}})=p_\mathrm{data}(\mathbf{x},\hat{\mathbf{x}})\Rightarrow p_G(\mathbf{x})=p_\mathrm{data}(\mathbf{x})$.
\end{restatable}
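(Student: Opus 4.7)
The plan is to reduce the generator's self-supervised objective to a single linear functional of $\hat{D}$ using the constant-target assumption, substitute the explicit form of $\hat{D}^{*}$ from the preceding proposition, and then recognize the resulting integral as an $f$-divergence whose generator matches the AHM curve in Figure~\ref{fig:fdiv}.

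Concretely, I would start from (\ref{eq:ass-g}) and apply the identity $\|a-\boldsymbol{c}\|_2^{2}-\|a+\boldsymbol{c}\|_2^{2}=-4\,a^{\top}\boldsymbol{c}$ (with $a=\hat{D}(T(G(\mathbf{z});\boldsymbol{\omega}),G(\mathbf{z}))$) to collapse the non-saturating minus saturating pieces into
\[
\mathcal{L}^{\mathrm{ss}}_G \;=\; -4\,\mathbb{E}_{\mathbf{z},\boldsymbol{\omega}}\!\left[\hat{D}\bigl(T(G(\mathbf{z});\boldsymbol{\omega}),G(\mathbf{z})\bigr)^{\top}\boldsymbol{c}\right].
\]
Next I would specialize the optimal discriminator from the proposition: the integrals in its numerator reduce to $\boldsymbol{c}\bigl(p_\mathrm{data}(\mathbf{x},\hat{\mathbf{x}})-p_G(\mathbf{x},\hat{\mathbf{x}})\bigr)$, so
\[
\hat{D}^{*}(\hat{\mathbf{x}},\mathbf{x})\;=\;\boldsymbol{c}\cdot\frac{p_\mathrm{data}(\mathbf{x},\hat{\mathbf{x}})-p_G(\mathbf{x},\hat{\mathbf{x}})}{p_\mathrm{data}(\mathbf{x},\hat{\mathbf{x}})+p_G(\mathbf{x},\hat{\mathbf{x}})}.
\]
Dotting with $\boldsymbol{c}$ yields a scalar factor $\|\boldsymbol{c}\|_2^{2}=c$, and rewriting the expectation over $(\mathbf{z},\boldsymbol{\omega})$ as an integral against the joint $p_G(\mathbf{x},\hat{\mathbf{x}})$ (induced by $\mathbf{z}\sim p(\mathbf{z})$, $\boldsymbol{\omega}\sim p(\boldsymbol{\omega})$, and $\hat{\mathbf{x}}=T(G(\mathbf{z});\boldsymbol{\omega})$) gives
\[
\mathcal{L}^{\mathrm{ss}}_G \;=\; -4c\!\int\!\frac{p_G(\mathbf{x},\hat{\mathbf{x}})\bigl(p_\mathrm{data}(\mathbf{x},\hat{\mathbf{x}})-p_G(\mathbf{x},\hat{\mathbf{x}})\bigr)}{p_\mathrm{data}(\mathbf{x},\hat{\mathbf{x}})+p_G(\mathbf{x},\hat{\mathbf{x}})}\,\mathrm{d}\mathbf{x}\,\mathrm{d}\hat{\mathbf{x}}.
\]

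To finish, I would recognize this integral as an $f$-divergence with $f(t)=(1-t)/(1+t)$: writing $q=p_G(\mathbf{x},\hat{\mathbf{x}})$ and $p=p_\mathrm{data}(\mathbf{x},\hat{\mathbf{x}})$, one has $q\,f(p/q)=q(q-p)/(q+p)$, so the displayed integral equals $-4c\cdot\!\int q\,f(p/q)\,\mathrm{d}\mathbf{x}\,\mathrm{d}\hat{\mathbf{x}}\cdot(-1)=4c\cdot M_{\mathrm{AH}}(p_\mathrm{data}(\mathbf{x},\hat{\mathbf{x}})\,\|\,p_G(\mathbf{x},\hat{\mathbf{x}}))$ after matching the convention and sign used in Figure~\ref{fig:fdiv}. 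The characterization of the minimizer follows from strict convexity of $f$ on $(0,\infty)$: Jensen's inequality gives $M_{\mathrm{AH}}\geq 0$ with equality iff $p_\mathrm{data}(\mathbf{x},\hat{\mathbf{x}})=p_G(\mathbf{x},\hat{\mathbf{x}})$ almost everywhere, and marginalizing $\hat{\mathbf{x}}$ on both sides yields $p_\mathrm{data}(\mathbf{x})=p_G(\mathbf{x})$.

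The main obstacle I expect is bookkeeping rather than conceptual: tracking the sign carefully (the saturating term in (\ref{eq:ass-g}) enters with a minus, which combines with the $-4$ from the algebraic identity to give the correct $+4c$ in front of $M_{\mathrm{AH}}$) and making sure the $f$-divergence orientation $D_f(p_\mathrm{data}\|p_G)=\int p_G\,f(p_\mathrm{data}/p_G)$ is consistent with the arrangement of arguments in the theorem statement. A secondary subtlety is the change-of-variables step that turns $\mathbb{E}_{\mathbf{z},\boldsymbol{\omega}}$ into an integral against the induced joint $p_G(\mathbf{x},\hat{\mathbf{x}})$, which implicitly uses the definition of $p_G$ through the Dirac delta; this is a standard manipulation but deserves an explicit line to ensure the $\int\mathrm{d}\boldsymbol{\omega}$ in $\hat{D}^{*}$'s numerator is properly absorbed into the joint after the dot product with $\boldsymbol{c}$.
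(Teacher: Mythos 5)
Your proposal is correct and follows essentially the same route as the paper's proof: substitute the optimal discriminator $\hat{D}^{*}=\boldsymbol{c}\,(p_\mathrm{data}-p_G)/(p_\mathrm{data}+p_G)$ into the generator objective, rewrite the expectation as an integral against the induced joint $p_G(\mathbf{x},\hat{\mathbf{x}})$, and identify the result as $4c\cdot M_\mathrm{AH}$. The only difference is cosmetic: you collapse the two squared norms via the polarization identity $\|a-\boldsymbol{c}\|_2^{2}-\|a+\boldsymbol{c}\|_2^{2}=-4a^{\top}\boldsymbol{c}$ before substituting, whereas the paper substitutes first and expands the two terms separately; your Jensen-based justification of the equality case is in fact slightly more explicit than the paper's.
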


\cref{thm} reveals that the generator of AugSelf-GAN theoretically still satisfies generative modeling, i.e., accurately learning the target data distribution, under certain assumptions.
Although AugSelf-GAN does not obey the strict assumption, we note that this is not rare in the literature.\footnote{\citet{NIPS2014_5ca3e9b1} analyzed that saturated GAN optimizes the Jensen Shannon (JS) divergence, but in fact it uses non-saturated loss. LeCam-GAN~\cite{Tseng_2021_CVPR} showed a connection between the Le Cam (LC) divergence~\cite{le2012asymptotic} and its objective function based on fixed regularization, but in practice it uses exponential moving average.}
Under this assumption, AugSelf-GAN can be regarded as a multi-dimensional extension of LS-GAN~\cite{Mao_2017_ICCV} in terms of the loss function, while excluding that of the generator.
Additionally, our analysis offers an alternative theoretically grounded generator loss function for the LS-GAN family.\footnote{The theoretical analysis of LS-GAN is actually inconsistent with its generator loss function.}

\begin{restatable}{col}{bound}
\label{col}
The following equality and inequality hold for the AHM divergence:
\begin{itemize}
    \item $M_\mathrm{AH}(p_\mathrm{data}(\mathbf{x},\hat{\mathbf{x}})\|p_G(\mathbf{x},\hat{\mathbf{x}})) + M_\mathrm{AH}(p_G(\mathbf{x},\hat{\mathbf{x}})\|p_\mathrm{data}(\mathbf{x},\hat{\mathbf{x}})) = \Delta(p_\mathrm{data}(\mathbf{x},\hat{\mathbf{x}})\|p_G(\mathbf{x},\hat{\mathbf{x}}))$
    \item $M_\mathrm{AH}(p_\mathrm{data}(\mathbf{x},\hat{\mathbf{x}})\|p_G(\mathbf{x},\hat{\mathbf{x}})) = 1 - W(p_\mathrm{data}(\mathbf{x},\hat{\mathbf{x}})\|p_G(\mathbf{x},\hat{\mathbf{x}})) \leq 1$
\end{itemize}
where $\Delta$ is the Le Cam (LC) divergence~\cite{le2012asymptotic}, and $W$ is the harmonic mean divergence~\cite{taneja2008mean}.
\end{restatable}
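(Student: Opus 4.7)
The plan is to reduce both assertions to a single algebraic identity: the AHM divergence can be rewritten as $1$ minus the harmonic-mean functional $W$. Working from the $f$-divergence representation with generator $f(x) = (1-x)/(x+1)$ shown in \cref{fig:fdiv}, I would expand the pointwise integrand $q\,f(p/q) = q(q-p)/(p+q)$ and rearrange it as $q - 2pq/(p+q)$. Integrating termwise and using $\int q \, d\mathbf{x} = 1$ yields $M_{\mathrm{AH}}(p\|q) = 1 - W(p\|q)$, where by definition $W(p\|q) = \int 2pq/(p+q) \, d\mathbf{x}$ is the harmonic-mean integral.

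This identity delivers the second bullet immediately: the equality is exactly what was just derived, and $M_{\mathrm{AH}}(p\|q) \leq 1$ follows from $W(p\|q) \geq 0$, since the integrand $2pq/(p+q)$ is nonnegative. For the first bullet, I would next observe that $W$ is manifestly symmetric in its two arguments, so on probability densities $M_{\mathrm{AH}}(p\|q) = M_{\mathrm{AH}}(q\|p) = 1 - W(p\|q)$, and the left-hand side of the claimed identity equals $2 - 2 W(p\|q)$. I would then process the right-hand side via the companion identity $(p-q)^2/(p+q) = (p+q) - 4pq/(p+q)$, integrate against $d\mathbf{x}$, and obtain $\Delta(p\|q) = 2 - 2 W(p\|q)$, matching exactly; the identity follows by comparing the two expressions.

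There is no genuine obstacle beyond bookkeeping: both parts reduce to one-line algebraic manipulations of the harmonic-mean integrand. The only subtlety worth flagging is a consistency check on the canonical form of $f$ — one should verify that the convex generator defining $M_{\mathrm{AH}}$ agrees (up to the standard additive-linear ambiguity of $f$-divergences) with the form $(1-x)/(x+1)$ plotted in \cref{fig:fdiv}, so that the clean $1 - W$ representation is available on the nose. Once that is confirmed, the two bullets drop out as described, with symmetry of $W$ doing all of the real work in the first bullet and nonnegativity of $W$ supplying the bound in the second.
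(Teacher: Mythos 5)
Your proof is correct. For the second bullet it coincides with the paper's own argument: both expand the integrand $p_G\,(p_G-p_{\mathrm{data}})/(p_{\mathrm{data}}+p_G)$ as $p_G - 2p_{\mathrm{data}}p_G/(p_{\mathrm{data}}+p_G)$, integrate, and invoke nonnegativity of the harmonic-mean integral $W$. For the first bullet, however, you take a genuinely different route. The paper simply adds the two AHM integrands over the common denominator, so that $p_G(p_G-p_{\mathrm{data}})-p_{\mathrm{data}}(p_G-p_{\mathrm{data}})=(p_G-p_{\mathrm{data}})^2$ produces the Le Cam integrand in one line. You instead promote the identity $M_{\mathrm{AH}}(p\|q)=1-W(p\|q)$ to a central lemma, use the symmetry of $W$ to get $M_{\mathrm{AH}}(p\|q)+M_{\mathrm{AH}}(q\|p)=2-2W(p\|q)$, and separately verify $\Delta(p\|q)=2-2W(p\|q)$ via $(p-q)^2/(p+q)=(p+q)-4pq/(p+q)$. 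Both are elementary and fully rigorous; the paper's version is shorter and more self-contained for the first bullet, while yours has the advantage of exhibiting all three quantities ($M_{\mathrm{AH}}$, $\Delta$, and the symmetrized AHM sum) as affine functions of the single symmetric functional $W$, which unifies the two bullets under one computation and also yields the bounds $0\leq\Delta\leq 2$ noted in the paper as an immediate byproduct. Your flagged subtlety about the convex generator $f(x)=(1-x)/(1+x)$ is harmless here, since the integral form of $M_{\mathrm{AH}}$ used in the preceding theorem's proof matches that generator exactly.
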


\cref{col} reveals an inequality
$M_\mathrm{AH}(p_\mathrm{data}(\mathbf{x},\hat{\mathbf{x}})\|p_G(\mathbf{x},\hat{\mathbf{x}}))\leq \Delta(p_\mathrm{data}(\mathbf{x},\hat{\mathbf{x}})\|p_G(\mathbf{x},\hat{\mathbf{x}}))$ because of non-negativity of AHM divergence $M_\mathrm{AH}(p_G(\mathbf{x},\hat{\mathbf{x}})\|p_\mathrm{data}(\mathbf{x},\hat{\mathbf{x}})) \geq 0$.
\cref{fig:fdiv} plots the function $f$ in AHM divergence and other common $f$-divergences in the GAN literature.
The AHM divergence shows better robustness of the function $f$ than others for extremely large inputs $p(\mathbf{x})/q(\mathbf{x})=D^*(\mathbf{x})/(1-D^*(\mathbf{x}))$, which is likely for the optimal discriminator $D^*$ in data-limited scenarios.

\begin{table}[t]
\setlength{\tabcolsep}{5.2pt}
\caption{IS and FID comparisons of AugSelf-BigGAN with state-of-the-art methods on CIFAR-10 and CIFAR-100 with full and limited data. The best result is \textbf{bold} and the second best is \underline{underlined}.}
\centering
\begin{tabular}{clcccccc}
\toprule
& \multirow{2}{*}{Method} & \multicolumn{2}{c}{100\% training data} & \multicolumn{2}{c}{20\% training data} & \multicolumn{2}{c}{10\% training data} \\
\cmidrule(lr){3-4} \cmidrule(lr){5-6} \cmidrule(lr){7-8}
& & IS ($\uparrow$) & FID ($\downarrow$) & IS ($\uparrow$) & FID ($\downarrow$) & IS ($\uparrow$) & FID ($\downarrow$) \\
\midrule
\multirow{10}{*}{\rotatebox{90}{CIFAR-10}} & BigGAN~\cite{brock2018large} & 9.07 & 9.59 & 8.52 & 21.58 & 7.09 & 39.78 \\
& DiffAugment~\cite{NEURIPS2020_55479c55} & 9.16 & 8.70 & 8.65 & 14.04 & 8.09 & 22.40 \\
& CR-GAN~\cite{Zhang2020Consistency} & 9.17 & 8.49 & 8.61 & 12.84 & 8.49 & 18.70 \\
& LeCam-GAN~\cite{Tseng_2021_CVPR} & \textbf{9.43} & 8.28 & 8.83 & 12.56 & 8.57 & 17.68 \\
& DigGAN~\cite{NEURIPS2022_ce26d216} & \underline{9.28} & 8.49 & 8.89 & 13.01 & 8.32 & 17.87 \\
& Tickets~\cite{NEURIPS2021_af4f00ca} & - & 8.19 & - & 12.83 & - & 16.74 \\
& MaskedGAN~\cite{NEURIPS2022_0efcb188} & - & 8.41\textsubscript{$\pm$.03} & - & 12.51\textsubscript{$\pm$.09} & - & 15.89\textsubscript{$\pm$.12} \\
& GenCo~\cite{Cui_Huang_Luo_Zhang_Zhan_Lu_2022} & - & 7.98\textsubscript{$\pm$.02} & - & 12.61\textsubscript{$\pm$.05} & - & 18.10\textsubscript{$\pm$.06} \\
& AugSelf-BigGAN & \textbf{9.43}\textsubscript{$\pm$.14} & \underline{7.68}\textsubscript{$\pm$.06} & \underline{8.98}\textsubscript{$\pm$.09} & \underline{10.97}\textsubscript{$\pm$.09} & \underline{8.76}\textsubscript{$\pm$.05} & \underline{15.68}\textsubscript{$\pm$.26} \\
& AugSelf-BigGAN+ & 9.27\textsubscript{$\pm$.05} & \textbf{7.54}\textsubscript{$\pm$.04} & \textbf{9.08}\textsubscript{$\pm$.04} & \phantom{0}\textbf{9.95}\textsubscript{$\pm$.17} & \textbf{8.79}\textsubscript{$\pm$.04} & \textbf{12.76}\textsubscript{$\pm$.14} \\
\midrule
\multirow{10}{*}{\rotatebox{90}{CIFAR-100}} & BigGAN~\cite{brock2018large} & 10.71 & 12.87 & 8.58 & 33.11 & 6.74 & 66.71 \\
& DiffAugment~\cite{NEURIPS2020_55479c55} & 10.66 & 12.00 & 9.47 & 22.14 & 8.38 & 33.70 \\
& CR-GAN~\cite{Zhang2020Consistency} & 10.81 & 11.25 & 9.12 & 20.28 & 8.70 & 26.90 \\
& LeCam-GAN~\cite{Tseng_2021_CVPR} & 11.05 & 11.20 & 9.81 & 18.03 & 9.27 & 27.63 \\
& DigGAN~\cite{NEURIPS2022_ce26d216} & \textbf{11.45} & 11.63 & 9.54 & 19.79 & 8.98 & 24.59 \\
& Tickets~\cite{NEURIPS2021_af4f00ca} & - & 10.73 & - & 17.43 & - & 23.80 \\
& MaskedGAN~\cite{NEURIPS2022_0efcb188} & - & 11.65\textsubscript{$\pm$.03} & - & 18.33\textsubscript{$\pm$.09} & - & 24.02\textsubscript{$\pm$.12} \\
& GenCo~\cite{Cui_Huang_Luo_Zhang_Zhan_Lu_2022} & - & 10.92\textsubscript{$\pm$.02} & - & 18.44\textsubscript{$\pm$.04} & - & 25.22\textsubscript{$\pm$.06} \\
& AugSelf-BigGAN & \underline{11.19}\textsubscript{$\pm$.09} & \phantom{0}\textbf{9.88}\textsubscript{$\pm$.07} & \textbf{10.25}\textsubscript{$\pm$.06} & \underline{16.11}\textsubscript{$\pm$.25} & \underline{9.78}\textsubscript{$\pm$.08} & \underline{21.30}\textsubscript{$\pm$.15} \\
& AugSelf-BigGAN+ & 11.12\textsubscript{$\pm$.10} & \underline{10.09}\textsubscript{$\pm$.05} & \underline{10.14}\textsubscript{$\pm$.11} & \textbf{15.33}\textsubscript{$\pm$.20} & \textbf{9.93}\textsubscript{$\pm$.06} & \textbf{18.64}\textsubscript{$\pm$.09} \\
\bottomrule
\end{tabular}
\label{tab:cifar}
\end{table}

\vspace{-5pt}

\section{Experiments}

We implement AugSelf-GAN based on DiffAugment~\cite{NEURIPS2020_55479c55}, keeping the backbones and settings unchanged for fair comparisons with prior work under two evaluation metrics, IS~\cite{NIPS2016_8a3363ab} and FID~\cite{NIPS2017_8a1d6947}.
The mean and standard deviation (if reported) are obtained with five evaluation runs at the best FID checkpoint.
Each of experiments in this work was conducted on an 32GB NVIDIA V100 GPU.

\subsection{Comparison with State-of-the-Art Methods}

\vspace{-5pt}

\begin{table}[t]
\caption{FID comparison of AugSelf-StyleGAN2 with competing methods on FFHQ and LSUN-Cat with limited training samples. The best result is highlighted in \textbf{bold} and the second best is \underline{underlined}.}
\centering
\begin{tabular}{lcccccccc}
\toprule
\multirow{2}{*}{Method} & \multicolumn{4}{c}{FFHQ} & \multicolumn{4}{c}{LSUN-Cat} \\
\cmidrule(lr){2-5} \cmidrule(lr){6-9}
& 30K & 10K & 5K & 1K & 30K & 10K & 5K & 1K \\
\midrule
StyleGAN2~\cite{Karras_2020_CVPR} & 6.16 & 14.75 & 26.60 & 62.16 & 10.12 & 17.93 & 34.69 & 182.85 \\
+ ADA~\cite{NEURIPS2020_8d30aa96} & 5.46 & \phantom{0}8.13 & 10.96 & \underline{21.29} & 10.50 & 13.13 & 16.95 & \phantom{0}43.25 \\
+ DiffAugment~\cite{NEURIPS2020_55479c55} & \underline{5.05} & \phantom{0}7.86 & 10.45 & 25.66 & \phantom{0}9.68 & 12.07 & 16.11 & \phantom{0}42.26 \\
AugSelf-StyleGAN2 & \textbf{4.95} & \phantom{0}\underline{6.98} & \phantom{0}\underline{9.69} & 23.38 & \phantom{0}\textbf{9.22} & \textbf{11.98} & \underline{14.86} & \phantom{0}\underline{36.76} \\
AugSelf-StyleGAN2+ & 5.82 & \phantom{0}\textbf{6.65} & \phantom{0}\textbf{9.15} & \textbf{20.39} & \phantom{0}\underline{9.43} & \underline{12.00} & \textbf{14.12} & \phantom{0}\textbf{26.52} \\
\bottomrule
\end{tabular}
\label{tab:ffhq}
\end{table}

\paragraph{CIFAR-10 and CIFAR-100.}
\cref{tab:cifar} reports the results on CIFAR-10 and CIFAR-100~\cite{krizhevsky2009learning}. These experiments are based on the BigGAN architecture~\cite{brock2018large}. Our method significantly outperforms the direct baseline DiffAugment~\cite{NEURIPS2020_55479c55} and yields the best generation performance in terms of FID and IS compared with SOTA methods. Notably, our method achieves further improvement when using stronger augmentation (see \cref{tab:augs}), i.e., AugSelf-BigGAN+ (translation$\uparrow$ and cutout$\uparrow$).

\paragraph{FFHQ and LSUN-Cat.}
\cref{tab:ffhq} reports the FID results on FFHQ~\cite{Karras_2019_CVPR} and LSUN-Cat~\cite{yu2015lsun}. The hyper-parameter is $\lambda_g=0.2$. AugSelf-GAN performs substantially better than baselines with the same network backbone. Also, stronger augmentation, i.e., AugSelf-StyleGAN2+ (translation$\uparrow$ and cutout$\uparrow$), further improves the performance when training data is very limited.

\begin{figure}[t]
\setlength{\tabcolsep}{2.5pt}
\centering
\begin{tabular}{cccccc}
    & Obama & Grumpy cat & Panda & AnimalFace Cat & AnimalFace Dog \\
    \rotatebox[origin=c]{90}{DA} & 
    \begin{minipage}{0.18\textwidth}\includegraphics[width=\linewidth]{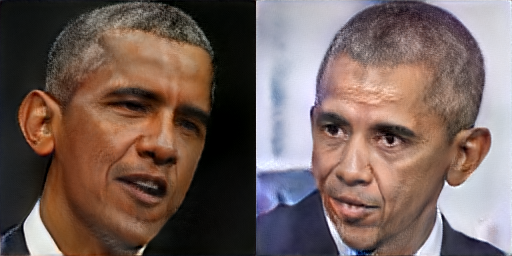}\end{minipage} & 
    \begin{minipage}{0.18\textwidth}\includegraphics[width=\linewidth]{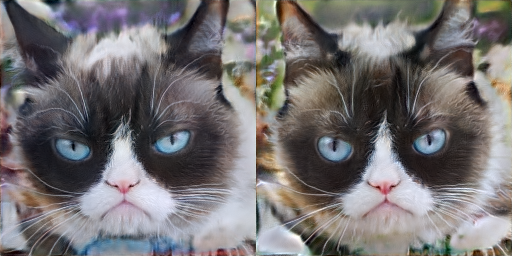}\end{minipage} & 
    \begin{minipage}{0.18\textwidth}\includegraphics[width=\linewidth]{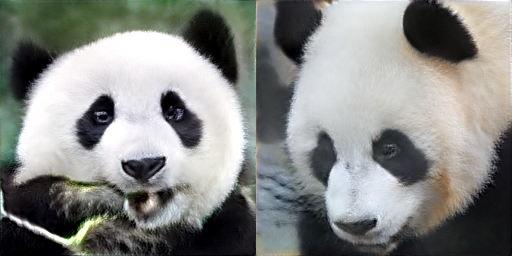}\end{minipage} & 
    \begin{minipage}{0.18\textwidth}\includegraphics[width=\linewidth]{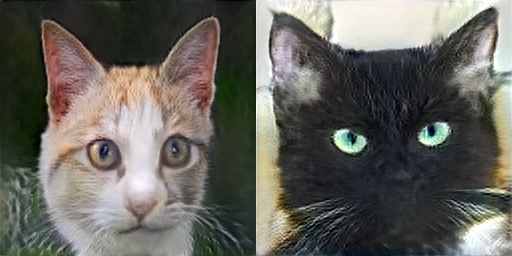}\end{minipage} &
    \begin{minipage}{0.18\textwidth}\includegraphics[width=\linewidth]{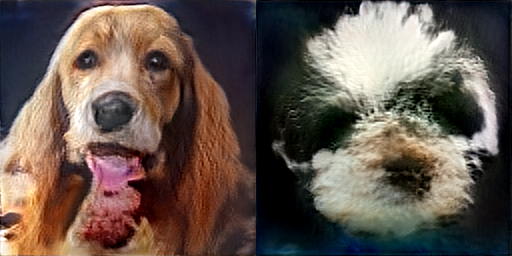}\end{minipage} \\
    \vspace{-6pt} \\
    \rotatebox[origin=c]{90}{AS} & 
    \begin{minipage}{0.18\textwidth}\includegraphics[width=\linewidth]{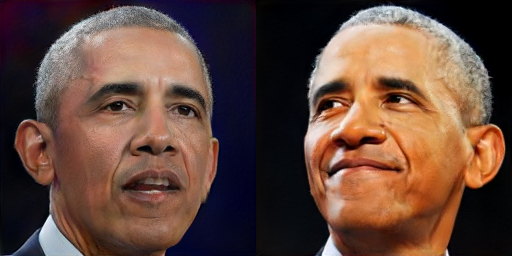}\end{minipage} & 
    \begin{minipage}{0.18\textwidth}\includegraphics[width=\linewidth]{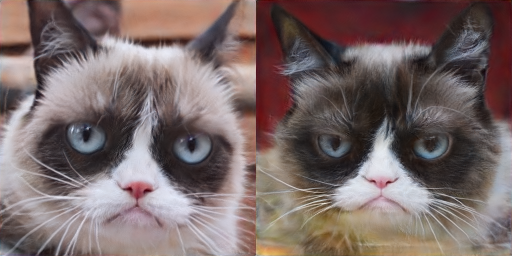}\end{minipage} & 
    \begin{minipage}{0.18\textwidth}\includegraphics[width=\linewidth]{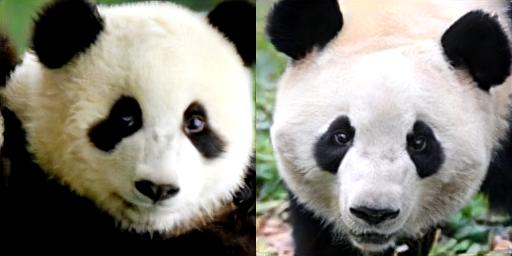}\end{minipage} & 
    \begin{minipage}{0.18\textwidth}\includegraphics[width=\linewidth]{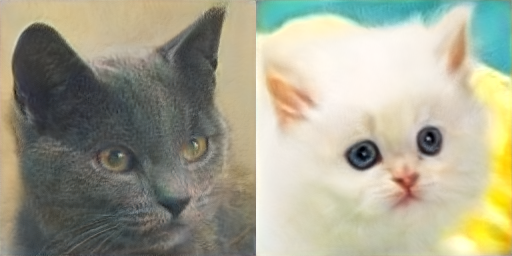}\end{minipage} &
    \begin{minipage}{0.18\textwidth}\includegraphics[width=\linewidth]{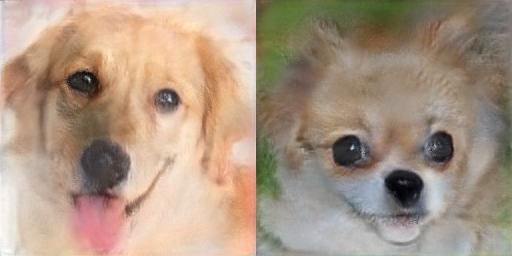}\end{minipage} \\
\end{tabular}
\caption{Qualitative comparison between DiffAug (DA) and AugSelf (AS) on low-shot generation.}
\label{fig:lowshot}
\end{figure}

\begin{table}[t]
\caption{FID comparison of AugSelf-StyleGAN2 with state-of-the-art methods with and without pre-training on five low-shot datasets. The best result is in \textbf{bold} and the second best is \underline{underlined}.}
\centering
\begin{tabular}{lcccccc}
\toprule
\multirow{2}{*}{Method} & \multirow{2}{*}{Pre-training?} & \multicolumn{3}{c}{100-shot} & \multicolumn{2}{c}{AnimalFaces} \\
\cmidrule(lr){3-5} \cmidrule(lr){6-7}
& & Obama & Grumpy cat & Panda & Cat & Dog \\
\midrule
Scale/shift~\cite{Noguchi_2019_ICCV} & Yes & 50.72 & 34.20 & 21.38 & 54.83 & \phantom{0}83.04 \\
MineGAN~\cite{Wang_2020_CVPR} & Yes & 50.63 & 34.54 & 14.84 & 54.45 & \phantom{0}93.03 \\
TransferGAN~\cite{Wang_2018_ECCV} & Yes & 48.73 & 34.06 & 23.20 & 52.61 & \phantom{0}82.38 \\
FreezeD~\cite{mo2020freeze} & Yes & \underline{41.87} & 31.22 & 17.95 & 47.70 & \phantom{0}70.46 \\
\midrule
StyleGAN2~\cite{Karras_2020_CVPR} & No & 80.20 & 48.90 & 34.27 & 71.71 & 130.19 \\
+ AdvAug~\cite{NEURIPS2021_af4f00ca} & No & 52.86 & 31.02 & 14.75 & 47.40 & \phantom{0}68.28 \\
+ ADA~\cite{NEURIPS2020_8d30aa96} & No & 45.69 & \underline{26.62} & 12.90 & \underline{40.77} & \phantom{0}\underline{56.83} \\
+ APA~\cite{NEURIPS2021_b534ba68} & No & 42.97 & 28.10 & 19.21 & 42.60 & \phantom{0}81.16 \\
+ DiffAugment~\cite{NEURIPS2020_55479c55} & No & 46.87 & 27.08 & \underline{12.06} & 42.44 & \phantom{0}58.85 \\
AugSelf-StyleGAN2 & No & \textbf{26.00} & \textbf{19.81} & \phantom{0}\textbf{8.36} & \textbf{30.53} & \phantom{0}\textbf{48.19} \\
\bottomrule
\end{tabular}
\label{tab:lowshot}
\end{table}

\paragraph{Low-shot image generation.}
\cref{tab:lowshot} shows the FID scores on five common low-shot image generation benchmarks~\cite{si2011learning} (Obama, Grumpy cat, Panda, AnimalFace cat, and AnimalFace dog).
The baselines are divided into two categories according to whether they were pre-trained.
Due to its training stability, we can train AugSelf-StyleGAN2 for 5k generator update steps to ensure convergence. The hyper-parameters are $\lambda_d=\lambda_g=0.1$ on Grumpy cat and AnimalFace cat, and the self-supervision is color on all datasets.
Impressively, AugSelf-StyleGAN2 surpasses competing methods by a large margin on all low-shot datasets, approaching SOTA FID scores to our knowledge. 
\cref{fig:lowshot} visually compares the generated images of DiffAugment and AugSelf-GAN, revealing the latter as superior in generating images with more diversity and fewer artifacts.

\subsection{Analysis of AugSelf-GAN}


\begin{table}
\vspace{-5pt}
\caption{FID comparison with fixed self-supervision. 
The best results are highlighted in \textbf{bold}.}
\centering
\begin{tabular}{clcccccc}
\toprule
& \multirow{2}{*}{Method} & \multicolumn{2}{c}{100\% training data}& \multicolumn{2}{c}{20\% training data} & \multicolumn{2}{c}{10\% training data} \\
\cmidrule(lr){3-4} \cmidrule(lr){5-6} \cmidrule(lr){7-8}
& & IS ($\uparrow$) & FID ($\downarrow$) & IS ($\uparrow$) & FID ($\downarrow$) & IS ($\uparrow$) & FID ($\downarrow$) \\
\midrule
\multirow{2}{*}{\rotatebox{90}{C10}} & Fixed ($\boldsymbol{c}=\boldsymbol{1}$) & \phantom{0}9.25\textsubscript{$\pm$.17} & \phantom{0}8.01\textsubscript{$\pm$.05} & \phantom{0}8.70\textsubscript{$\pm$.12} & 12.58\textsubscript{$\pm$.16} & \phantom{0}8.53\textsubscript{$\pm$.05} & 17.66\textsubscript{$\pm$.55} \\
& AugSelf-GAN & \phantom{0}\textbf{9.43}\textsubscript{$\pm$.14} & \phantom{0}\textbf{7.68}\textsubscript{$\pm$.06} & \phantom{0}\textbf{8.98}\textsubscript{$\pm$.09} & \textbf{10.97}\textsubscript{$\pm$.09} & \phantom{0}\textbf{8.76}\textsubscript{$\pm$.05} & \textbf{15.68}\textsubscript{$\pm$.26} \\
\midrule
\multirow{2}{*}{\rotatebox{90}{C100}} & Fixed ($\boldsymbol{c}=\boldsymbol{1}$) & 10.67\textsubscript{$\pm$.06} & 12.02\textsubscript{$\pm$.07} & \phantom{0}9.94\textsubscript{$\pm$.06} & 17.70\textsubscript{$\pm$.17} & \phantom{0}9.50\textsubscript{$\pm$.13} & 22.84\textsubscript{$\pm$.28} \\
& AugSelf-GAN & \textbf{11.19}\textsubscript{$\pm$.09} & \phantom{0}\textbf{9.88}\textsubscript{$\pm$.07} & \textbf{10.25}\textsubscript{$\pm$.06} & \textbf{16.11}\textsubscript{$\pm$.25} & \phantom{0}\textbf{9.78}\textsubscript{$\pm$.08} & \textbf{21.30}\textsubscript{$\pm$.15} \\
\bottomrule
\end{tabular}
\label{tab:ssl}
\end{table}

\paragraph{Fixed supervision.}
\cref{tab:ssl} reports the results of AugSelf-GAN on CIFAR-10 and CIFAR-100 compared to the setup using fixed self-supervision, i.e., $\boldsymbol{\omega}^+=-\boldsymbol{\omega}^-=\boldsymbol{1}$, which corresponds to the assumption in \cref{thm}. AugSelf-GAN outperforms the fixed self-supervision in terms of both IS and FID in all training data regimes. The reason is somewhat intuitive, as fixed self-supervision does not constitute self-supervised learning for the model, therefore it cannot enable the discriminator to learn semantic information related to data augmentation to optimize the generator.
Interestingly, the fixed one beats the baseline, which may be attributed to the multi-input~\cite{NEURIPS2018_288cc0ff} and regression loss~\cite{Mao_2017_ICCV}.

\begin{table}[t]
\caption{Study on stronger augmentation. The best is in \textbf{bold} and the second best is \underline{underlined}.}
\centering
\begin{tabular}{clcccccc}
\toprule
& \multirow{2}{*}{Method} & \multicolumn{2}{c}{100\% training data}& \multicolumn{2}{c}{20\% training data} & \multicolumn{2}{c}{10\% training data} \\
\cmidrule(lr){3-4} \cmidrule(lr){5-6} \cmidrule(lr){7-8}
& & IS ($\uparrow$) & FID ($\downarrow$) & IS ($\uparrow$) & FID ($\downarrow$) & IS ($\uparrow$) & FID ($\downarrow$) \\
\midrule
\multirow{4}{*}{\rotatebox{90}{C10}} & DiffAugment & \phantom{0}\underline{9.29}\textsubscript{$\pm$.02} & \phantom{0}8.48\textsubscript{$\pm$.13} & \phantom{0}8.84\textsubscript{$\pm$.12} & 15.14\textsubscript{$\pm$.47} & \phantom{0}\textbf{8.80}\textsubscript{$\pm$.01} & 20.60\textsubscript{$\pm$.13} \\
& + trans.\@$\uparrow$ cut.\@$\uparrow$ & \phantom{0}9.28\textsubscript{$\pm$.06} & \phantom{0}8.42\textsubscript{$\pm$.18} & \phantom{0}8.78\textsubscript{$\pm$.06} & 14.28\textsubscript{$\pm$.27} & \phantom{0}8.69\textsubscript{$\pm$.07} & 20.93\textsubscript{$\pm$.21} \\
& AugSelf-GAN & \phantom{0}\textbf{9.43}\textsubscript{$\pm$.14} & \phantom{0}\underline{7.68}\textsubscript{$\pm$.06} & \phantom{0}\underline{8.98}\textsubscript{$\pm$.09} & \underline{10.97}\textsubscript{$\pm$.09} & \phantom{0}8.76\textsubscript{$\pm$.05} & \underline{15.68}\textsubscript{$\pm$.26} \\
& + trans.\@$\uparrow$ cut.\@$\uparrow$ & \phantom{0}9.27\textsubscript{$\pm$.05} & \phantom{0}\textbf{7.54}\textsubscript{$\pm$.04} & \phantom{0}\textbf{9.08}\textsubscript{$\pm$.04} & \phantom{0}\textbf{9.95}\textsubscript{$\pm$.17} & \phantom{0}\underline{8.79}\textsubscript{$\pm$.04} & \textbf{12.76}\textsubscript{$\pm$.14} \\
\midrule
\multirow{4}{*}{\rotatebox{90}{C100}} & DiffAugment & 11.02\textsubscript{$\pm$.07} & 11.49\textsubscript{$\pm$.21} & \phantom{0}9.45\textsubscript{$\pm$.05} & 24.98\textsubscript{$\pm$.48} & \phantom{0}8.50\textsubscript{$\pm$.09} & 34.92\textsubscript{$\pm$.63} \\
& + trans.$\uparrow$ cut.$\uparrow$ & 11.10\textsubscript{$\pm$.08} & 11.28\textsubscript{$\pm$.20} & \phantom{0}9.58\textsubscript{$\pm$.05} & 24.10\textsubscript{$\pm$.66} & \phantom{0}8.59\textsubscript{$\pm$.04} & 35.32\textsubscript{$\pm$.46} \\
& AugSelf-GAN & \textbf{11.19}\textsubscript{$\pm$.09} & \phantom{0}\textbf{9.88}\textsubscript{$\pm$.07} & \textbf{10.25}\textsubscript{$\pm$.06} & \underline{16.11}\textsubscript{$\pm$.25} & \phantom{0}\underline{9.78}\textsubscript{$\pm$.08} & \underline{21.30}\textsubscript{$\pm$.15} \\
& + trans.\@$\uparrow$ cut.\@$\uparrow$ & \underline{11.12}\textsubscript{$\pm$.10} & \underline{10.09}\textsubscript{$\pm$.05} & \underline{10.14}\textsubscript{$\pm$.11} & \textbf{15.33}\textsubscript{$\pm$.20} & \phantom{0}\textbf{9.93}\textsubscript{$\pm$.06} & \textbf{18.64}\textsubscript{$\pm$.09} \\
\bottomrule
\end{tabular}
\label{tab:augs}
\end{table}

\paragraph{Stronger augmentation.} Translation and cutout actually erase parts of image information, which help prevent the discriminator from overfitting, but could suffer from underfitting if excessive. Our self-supervised task enables the discriminator to be aware of different levels of translation and cutout, which helps alleviate underfitting and allows us to explore stronger translation and cutout. \cref{tab:augs} compares AugSelf-GAN with DiffAugment in this setting. Overall, when data is limited, AugSelf-GAN can further benefit from stronger translation and cutout and achieve new SOTA FID results, while DiffAugment cannot. This implicitly indicates that our method enables the model o learn meaningful features to overcome underfitting, even under strong data augmentation.

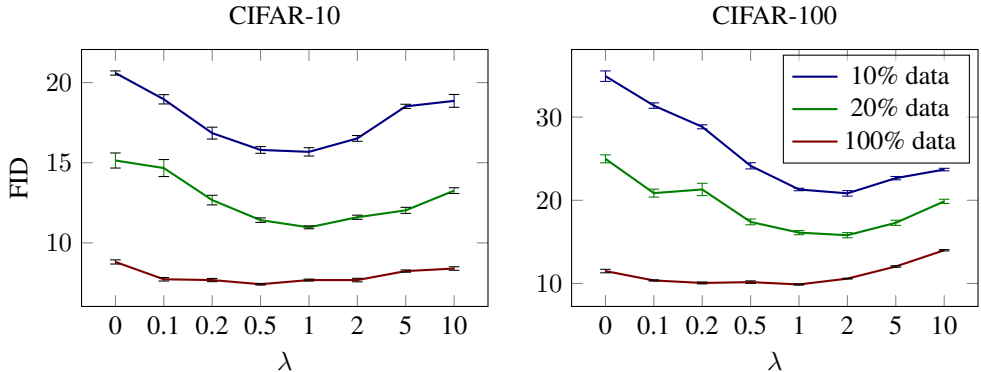
\begin{figure}[t]
\centering
\begin{subfigure}{0.5\textwidth}
\begin{tikzpicture}
\begin{axis}[
    width=\textwidth,
    height=5cm,
    title={CIFAR-10},
    xlabel=$\lambda$,
    ylabel={FID},
    symbolic x coords={0,0.1,0.2,0.5,1,2,5,10},
    xtick = data
]
\addplot [style=thick,draw=black!50!blue,error bars/.cd,y dir=both,y explicit] table [x=x,y=y,y error=error] {
x   y       error
0   20.60   0.13
0.1 18.96   0.29
0.2 16.85   0.37
0.5 15.80   0.21
1   15.68   0.26
2   16.52   0.18
5   18.52   0.13
10  18.86   0.40
};
\addplot [style=thick,draw=black!50!green,error bars/.cd,y dir=both,y explicit] table [x=x,y=y,y error=error] {
x   y       error
0   15.14   0.47
0.1 14.67   0.53
0.2 12.67   0.30
0.5 11.42   0.14
1   10.97   0.09
2   11.60   0.13
5   12.03   0.19
10  13.26   0.18
};
\addplot [style=thick,draw=black!50!red,error bars/.cd,y dir=both,y explicit] table [x=x,y=y,y error=error] {
x   y       error
0   8.81    0.13
0.1 7.73    0.11
0.2 7.68    0.10
0.5 7.42    0.05
1   7.68    0.06
2   7.68    0.11
5   8.24    0.07
10  8.40    0.11
};
\end{axis}
\end{tikzpicture}
\end{subfigure}\hfill
\begin{subfigure}{0.5\textwidth}
\begin{tikzpicture}
\begin{axis}[
    width=\textwidth,
    height=5cm,
    title={CIFAR-100},
    xlabel=$\lambda$,
    legend style={fill=white, fill opacity=0.5, draw opacity=1,text opacity=1},
    symbolic x coords={0,0.1,0.2,0.5,1,2,5,10},
    xtick = data
]
\addplot [style=thick,color=black!50!blue,error bars/.cd,y dir=both,y explicit] table [x=x,y=y,y error=error] {
x   y       error
0   34.92   0.63
0.1 31.38   0.33
0.2 28.83   0.24
0.5 24.14   0.37
1   21.30   0.15
2   20.83   0.33
5   22.66   0.20
10  23.69   0.15
};
\addplot [style=thick,color=black!50!green,error bars/.cd,y dir=both,y explicit] table [x=x,y=y,y error=error] {
x   y       error
0   24.98   0.48
0.1 20.86   0.48
0.2 21.30   0.74
0.5 17.40   0.35
1   16.11   0.25
2   15.80   0.31
5   17.29   0.31
10  19.87   0.25
};
\addplot [style=thick,draw=black!50!red,error bars/.cd,y dir=both,y explicit] table [x=x,y=y,y error=error] {
x   y       error
0   11.49   0.21
0.1 10.36   0.07
0.2 10.06   0.11
0.5 10.16   0.15
1   9.88    0.07
2   10.58   0.04
5   12.04   0.09
10  13.99   0.08
};
\legend{10\% data,20\% data,100\% data}
\end{axis}
\end{tikzpicture}
\end{subfigure}
\caption{FID curves with varying hyper-parameters $\lambda=\lambda_d=\lambda_g\in[0,10]$ on CIFAR-10 and CIFAR-100. The hyper-parameter $\lambda=0$ corresponds to the baseline BigGAN + DiffAugment.}
\label{fig:lambda}
\end{figure}

\paragraph{Hyper-parameters.}

\cref{fig:lambda} plots the FID results of AugSelf-GAN with different hyper-parameters $\lambda=\lambda_d=\lambda_g$ ranging from $[0, 10]$ on CIFAR-10 and CIFAR-100.
Notice that $\lambda=0$ corresponds to the baseline BigGAN + DiffAugment.
AugSelf-BigGAN performs the best when $\lambda$ is near $1$.
It is worth noting that AugSelf-BigGAN outperforms the baseline even for $\lambda=10$ with $10\%$ and $20\%$ training data, demonstrating superior robustness with respect to the hyper-parameter $\lambda$.

\section{Conclusion}

This paper proposes a data-efficient GAN training method by utilizing augmentation parameters as self-supervision.
Specifically, a novel self-supervised discriminator is proposed for predicting the augmentation parameters and data authenticity of augmented (real and generated) data simultaneously, given the original data. 
Meanwhile, the generator is encouraged to generate real rather than fake data of which augmentation parameters can be recognized by the self-supervised discriminator after augmentation.
Theoretical analysis reveals a connection between the optimization objective of the generator and the arithmetic $-$ harmonic mean divergence under certain assumptions.
Experiments on data-limited benchmarks demonstrate superior qualitative and quantitative performance of the proposed method compared to previous methods.

\paragraph{Limitations.}

In our experiments, we observed less significant improvement of AugSelf-GAN under sufficient training data. Furthermore, its effectiveness depends on the specific data augmentation used. In some cases, inappropriate data augmentation may limit the performance gain. 

\paragraph{Broader impacts.}

This work aims at improving GANs under limited training data. While this may result in negative societal impacts, such as lowering the threshold of generating fake content or exacerbating bias and discrimination due to data issues, we believe that these risks can be mitigated. By establishing ethical guidelines for users and exploring fake content detection techniques, one can prevent these undesirable outcomes. Furthermore, this work contributes to the overall development of GANs and even generative models, ultimately promoting their potential benefits for society.

\section*{Acknowledgements}

We thank the anonymous reviewers for their valuable and constructive feedback.
This work is funded by the National Natural Science Foundation of China under Grant Nos. 62272125, 62102402, U21B2046. Huawei Shen is also supported by Beijing Academy of Artificial Intelligence (BAAI).

\bibliography{neurips_2023}
\bibliographystyle{plainnat}

\clearpage
\appendix

\section{Proofs}
\label{sec:proof}

\dis*

\begin{proof}
The objective function of the self-supervised discriminator can be written as follows:
\begin{IEEEeqnarray*}{rCl}
    \mathcal{L}^{\mathrm{ss}}_{\hat{D}} &=& \mathbb{E}_{\mathbf{x},\boldsymbol{\omega}}\left[\|\hat{D}(T(\mathbf{x};\boldsymbol{\omega}),\mathbf{x})-\boldsymbol{\omega}^+\|_2^2\right] + \mathbb{E}_{\mathbf{z},\boldsymbol{\omega}}\left[\|\hat{D}(T(G(\mathbf{z});\boldsymbol{\omega}),G(\mathbf{z}))-\boldsymbol{\omega}^-\|_2^2\right]
    \\
    &=&\iiint \left[p_\mathrm{data}(\mathbf{x},\boldsymbol{\omega},\hat{\mathbf{x}})\|\hat{D}(\hat{\mathbf{x}},\mathbf{x})-\boldsymbol{\omega}^+\|_2^2+p_G(\mathbf{x},\boldsymbol{\omega},\hat{\mathbf{x}})\|\hat{D}(\hat{\mathbf{x}},\mathbf{x})-\boldsymbol{\omega}^-\|_2^2\right]\mathrm{d}\mathbf{x}\mathrm{d}\boldsymbol{\omega}\mathrm{d}\hat{\mathbf{x}}.
\end{IEEEeqnarray*}
Minimizing the integral objective is equivalent to minimizing the objective on each data point:
\begin{IEEEeqnarray*}{rCl}
    \mathcal{L}_{\hat{D}}^\mathrm{ss}(\mathbf{x},\hat{\mathbf{x}})=\int \left[p_\mathrm{data}(\mathbf{x},\boldsymbol{\omega},\hat{\mathbf{x}})\|\hat{D}(\hat{\mathbf{x}},\mathbf{x})-\boldsymbol{\omega}^+\|_2^2+ p_G(\mathbf{x},\boldsymbol{\omega},\hat{\mathbf{x}})\|\hat{D}(\hat{\mathbf{x}},\mathbf{x})-\boldsymbol{\omega}^-\|_2^2\right]\mathrm{d}\boldsymbol{\omega}.
\end{IEEEeqnarray*}
Getting its derivative with respect to the self-supervised discriminator and let it equals to 0, we have the optimal self-supervised discriminator as:
\begin{IEEEeqnarray*}{rCl}
    \frac{\partial \mathcal{L}_{\hat{D}}^\mathrm{ss}}{\partial \hat{D}(\hat{\mathbf{x}},\mathbf{x})}&=&\int \left[p_\mathrm{data}(\mathbf{x},\boldsymbol{\omega},\hat{\mathbf{x}})2(\hat{D}(\hat{\mathbf{x}},\mathbf{x})-\boldsymbol{\omega}^+)+p_G(\mathbf{x},\boldsymbol{\omega},\hat{\mathbf{x}})2(\hat{D}(\hat{\mathbf{x}},\mathbf{x})-\boldsymbol{\omega}^-)\right]\mathrm{d}\boldsymbol{\omega}=0
    \\
    \Rightarrow \hat{D}^*(\hat{\mathbf{x}},\mathbf{x})&=&\frac{\int p_\mathrm{data}(\mathbf{x},\boldsymbol{\omega},\hat{\mathbf{x}})\boldsymbol{\omega}^+\mathrm{d}\boldsymbol{\omega}+\int p_G(\mathbf{x},\boldsymbol{\omega},\hat{\mathbf{x}})\boldsymbol{\omega}^-\mathrm{d}\boldsymbol{\omega}}{p_\mathrm{data}(\mathbf{x},\hat{\mathbf{x}})+p_G(\mathbf{x},\hat{\mathbf{x}})}.
\end{IEEEeqnarray*}
\end{proof}

\gen*

\begin{proof}
The objective function of the self-supervised task for the generator can be written as follows:
\begin{IEEEeqnarray*}{rCl}
    &&\mathbb{E}_{\mathbf{z},\boldsymbol{\omega}}\left[\|\hat{D}(T(G(\mathbf{z});\boldsymbol{\omega}),G(\mathbf{z}))-\boldsymbol{\omega}^+\|_2^2\right]-\mathbb{E}_{\mathbf{z},\boldsymbol{\omega}}\left[\|\hat{D}(T(G(\mathbf{z});\boldsymbol{\omega}),G(\mathbf{z}))-\boldsymbol{\omega}^-\|_2^2\right]
    \\
    &=&\iiint p_G(\mathbf{x},\boldsymbol{\omega},\hat{\mathbf{x}})\left[\|\hat{D}(\mathbf{x},\hat{\mathbf{x}})-\boldsymbol{\omega}^+\|_2^2-\|\hat{D}(\mathbf{x},\hat{\mathbf{x}})-\boldsymbol{\omega}^-\|_2^2\right]\mathrm{d}\mathbf{x}\mathrm{d}\boldsymbol{\omega}\mathrm{d}\hat{\mathbf{x}}
    \\
    &=&\iint p_G(\mathbf{x},\hat{\mathbf{x}})\left[\|\frac{\boldsymbol{c}(p_\mathrm{data}(\mathbf{x},\hat{\mathbf{x}})-p_G(\mathbf{x},\hat{\mathbf{x}}))}{p_\mathrm{data}(\mathbf{x},\hat{\mathbf{x}})+p_G(\mathbf{x},\hat{\mathbf{x}})}-\boldsymbol{c}\|_2^2-\|\frac{\boldsymbol{c}(p_\mathrm{data}(\mathbf{x},\hat{\mathbf{x}})-p_G(\mathbf{x},\hat{\mathbf{x}}))}{p_\mathrm{data}(\mathbf{x},\hat{\mathbf{x}})+p_G(\mathbf{x},\hat{\mathbf{x}})}+\boldsymbol{c}\|_2^2\right]\mathrm{d}\mathbf{x}\mathrm{d}\hat{\mathbf{x}}
    \\
    &=&c\cdot \iint p_G(\mathbf{x},\hat{\mathbf{x}})\left[\|\frac{p_\mathrm{data}(\mathbf{x},\hat{\mathbf{x}})-p_G(\mathbf{x},\hat{\mathbf{x}})}{p_\mathrm{data}(\mathbf{x},\hat{\mathbf{x}})+p_G(\mathbf{x},\hat{\mathbf{x}})}-1\|_2^2-\|\frac{p_\mathrm{data}(\mathbf{x},\hat{\mathbf{x}})-p_G(\mathbf{x},\hat{\mathbf{x}})}{p_\mathrm{data}(\mathbf{x},\hat{\mathbf{x}})+p_G(\mathbf{x},\hat{\mathbf{x}})}+1\|_2^2\right]\mathrm{d}\mathbf{x}\mathrm{d}\hat{\mathbf{x}}
    \\
    &=&c\cdot \iint p_G(\mathbf{x},\hat{\mathbf{x}})\left[\|\frac{2p_G(\mathbf{x},\hat{\mathbf{x}})}{p_\mathrm{data}(\mathbf{x},\hat{\mathbf{x}})+p_G(\mathbf{x},\hat{\mathbf{x}})}\|_2^2-\|\frac{2p_\mathrm{data}(\mathbf{x},\hat{\mathbf{x}})}{p_\mathrm{data}(\mathbf{x},\hat{\mathbf{x}})+p_G(\mathbf{x},\hat{\mathbf{x}})}\|_2^2\right]\mathrm{d}\mathbf{x}\mathrm{d}\hat{\mathbf{x}}
    \\
    &=&4c\cdot \iint p_G(\mathbf{x},\hat{\mathbf{x}})\left[\frac{p_G(\mathbf{x},\hat{\mathbf{x}})^2-p_\mathrm{data}(\mathbf{x},\hat{\mathbf{x}})^2}{(p_\mathrm{data}(\mathbf{x},\hat{\mathbf{x}})+p_G(\mathbf{x},\hat{\mathbf{x}}))^2}\right]\mathrm{d}\mathbf{x}\mathrm{d}\hat{\mathbf{x}}
    \\
    &=&4c\cdot \iint p_G(\mathbf{x},\hat{\mathbf{x}})\left[\frac{p_G(\mathbf{x},\hat{\mathbf{x}})-p_\mathrm{data}(\mathbf{x},\hat{\mathbf{x}})}{p_\mathrm{data}(\mathbf{x},\hat{\mathbf{x}})+p_G(\mathbf{x},\hat{\mathbf{x}})}\right]\mathrm{d}\mathbf{x}\mathrm{d}\hat{\mathbf{x}}
    \\
    &=&4c\cdot M_\mathrm{AH}(p_\mathrm{data}(\mathbf{x},\hat{\mathbf{x}})\|p_G(\mathbf{x},\hat{\mathbf{x}})),
\end{IEEEeqnarray*}
where $c=\|\boldsymbol{c}\|_2^2$ is a constant scalar for the constant vector $\boldsymbol{c}=\boldsymbol{\omega}^+=-\boldsymbol{\omega}^-$. 
And we have the optimal generator $p_G(\mathbf{x},\hat{\mathbf{x}})=p_\mathrm{data}(\mathbf{x},\hat{\mathbf{x}})\Rightarrow p_G(\mathbf{x})=p_\mathrm{data}(\mathbf{x})$ to minimize the AHM divergence.
\end{proof}

\clearpage

\bound*

\begin{proof}
We first prove the first corollary:
\begin{IEEEeqnarray*}{rCl}
    && M_\mathrm{AH}(p_\mathrm{data}(\mathbf{x},\hat{\mathbf{x}})\|p_G(\mathbf{x},\hat{\mathbf{x}})) \\
    &\leq& M_\mathrm{AH}(p_\mathrm{data}(\mathbf{x},\hat{\mathbf{x}})\|p_G(\mathbf{x},\hat{\mathbf{x}})) + M_\mathrm{AH}(p_G(\mathbf{x},\hat{\mathbf{x}})\|p_\mathrm{data}(\mathbf{x},\hat{\mathbf{x}})) \\
    &=& \iint p_G(\mathbf{x},\hat{\mathbf{x}}) \frac{p_G(\mathbf{x},\hat{\mathbf{x}})-p_\mathrm{data}(\mathbf{x},\hat{\mathbf{x}})}{p_\mathrm{data}(\mathbf{x},\hat{\mathbf{x}})+p_G(\mathbf{x},\hat{\mathbf{x}})}\mathrm{d}\mathbf{x}\mathrm{d}\hat{\mathbf{x}} + \iint p_\mathrm{data}(\mathbf{x},\hat{\mathbf{x}}) \frac{p_\mathrm{data}(\mathbf{x},\hat{\mathbf{x}})-p_G(\mathbf{x},\hat{\mathbf{x}})}{p_G(\mathbf{x},\hat{\mathbf{x}})+p_\mathrm{data}(\mathbf{x},\hat{\mathbf{x}})} \mathrm{d}\mathbf{x}\mathrm{d}\hat{\mathbf{x}} \\
    &=& \iint p_G(\mathbf{x},\hat{\mathbf{x}}) \frac{p_G(\mathbf{x},\hat{\mathbf{x}})-p_\mathrm{data}(\mathbf{x},\hat{\mathbf{x}})}{p_\mathrm{data}(\mathbf{x},\hat{\mathbf{x}})+p_G(\mathbf{x},\hat{\mathbf{x}})}\mathrm{d}\mathbf{x}\mathrm{d}\hat{\mathbf{x}} - \iint p_\mathrm{data}(\mathbf{x},\hat{\mathbf{x}}) \frac{p_G(\mathbf{x},\hat{\mathbf{x}})-p_\mathrm{data}(\mathbf{x},\hat{\mathbf{x}})}{p_G(\mathbf{x},\hat{\mathbf{x}})+p_\mathrm{data}(\mathbf{x},\hat{\mathbf{x}})} \mathrm{d}\mathbf{x}\mathrm{d}\hat{\mathbf{x}} \\
    &=& \iint \frac{(p_G(\mathbf{x},\hat{\mathbf{x}})-p_\mathrm{data}(\mathbf{x},\hat{\mathbf{x}}))^2}{p_\mathrm{data}(\mathbf{x},\hat{\mathbf{x}})+p_G(\mathbf{x},\hat{\mathbf{x}})}\mathrm{d}\mathbf{x}\mathrm{d}\hat{\mathbf{x}} \\
    &=& \Delta(p_\mathrm{data}(\mathbf{x},\hat{\mathbf{x}})\|p_G(\mathbf{x},\hat{\mathbf{x}})),
\end{IEEEeqnarray*}
where $0\leq \Delta(p_\mathrm{data}(\mathbf{x},\hat{\mathbf{x}})\|p_G(\mathbf{x},\hat{\mathbf{x}}))\leq 2$ is the Le Cam (LC) divergence~\cite{le2012asymptotic}.

The following proves the second corollary:
\begin{IEEEeqnarray*}{rCl}
    0&\leq& M_\mathrm{AH}(p_\mathrm{data}(\mathbf{x},\hat{\mathbf{x}})\|p_G(\mathbf{x},\hat{\mathbf{x}})) \\
    &=&\iint p_G(\mathbf{x},\hat{\mathbf{x}})\frac{p_G(\mathbf{x},\hat{\mathbf{x}})-p_\mathrm{data}(\mathbf{x},\hat{\mathbf{x}})}{p_\mathrm{data}(\mathbf{x},\hat{\mathbf{x}})+p_G(\mathbf{x},\hat{\mathbf{x}})}\mathrm{d}\mathbf{x}\mathrm{d}\hat{\mathbf{x}} \\
    &=&\iint p_G(\mathbf{x},\hat{\mathbf{x}})\left[1 -\frac{2p_\mathrm{data}(\mathbf{x},\hat{\mathbf{x}})}{p_\mathrm{data}(\mathbf{x},\hat{\mathbf{x}})+p_G(\mathbf{x},\hat{\mathbf{x}})}\right]\mathrm{d}\mathbf{x}\mathrm{d}\hat{\mathbf{x}} \\
    &=&1 -\iint \frac{2p_\mathrm{data}(\mathbf{x},\hat{\mathbf{x}})p_G(\mathbf{x},\hat{\mathbf{x}})}{p_\mathrm{data}(\mathbf{x},\hat{\mathbf{x}})+p_G(\mathbf{x},\hat{\mathbf{x}})}\mathrm{d}\mathbf{x}\mathrm{d}\hat{\mathbf{x}} \\
    &=&1 - W(p_\mathrm{data}(\mathbf{x},\hat{\mathbf{x}})\|p_G(\mathbf{x},\hat{\mathbf{x}})) \leq 1,
\end{IEEEeqnarray*}
where $0\leq W(p_\mathrm{data}(\mathbf{x},\hat{\mathbf{x}})\|p_G(\mathbf{x},\hat{\mathbf{x}}))\leq 1$ is the well known harmonic mean divergence~\cite{taneja2008mean}.
\end{proof}

\begin{table}[t]
\setlength{\tabcolsep}{5.2pt}
\caption{IS and FID of AugSelf-BigGAN with different self-supervised tasks (SS: \cref{eq:ss-d,eq:ss-g}; SS+: \cref{eq:ssp-d,eq:ss-g}; ASS: \cref{eq:ass-d,eq:ass-g}) on CIFAR-10 and CIFAR-100 with full and limited training data. The best is \textbf{bold} and the second best is \underline{underlined}.}
\centering
\begin{tabular}{clcccccc}

\toprule

& \multirow{2}{*}{Method} & \multicolumn{2}{c}{100\% training data}& \multicolumn{2}{c}{20\% training data} & \multicolumn{2}{c}{10\% training data} \\
\cmidrule(lr){3-4} \cmidrule(lr){5-6} \cmidrule(lr){7-8}
& & IS ($\uparrow$) & FID ($\downarrow$) & IS ($\uparrow$) & FID ($\downarrow$) & IS ($\uparrow$) & FID ($\downarrow$) \\

\midrule

\multirow{7}{*}{\rotatebox{90}{CIFAR-10}} & Baseline & \phantom{0}\underline{9.29}\textsubscript{$\pm$0.02} & \phantom{0}8.48\textsubscript{$\pm$0.13} & \phantom{0}8.84\textsubscript{$\pm$0.12} & 15.14\textsubscript{$\pm$0.47} & \phantom{0}\textbf{8.80}\textsubscript{$\pm$0.01} & 20.60\textsubscript{$\pm$0.13} \\

& SS ($\lambda_g=0$) & \phantom{0}9.28\textsubscript{$\pm$0.06} & \phantom{0}8.30\textsubscript{$\pm$0.12} & \phantom{0}\underline{8.86}\textsubscript{$\pm$0.09} & 13.96\textsubscript{$\pm$0.19} & \phantom{0}8.62\textsubscript{$\pm$0.09} & 20.94\textsubscript{$\pm$0.42} \\

& SS & \phantom{0}\underline{9.29}\textsubscript{$\pm$0.07} & \phantom{0}8.24\textsubscript{$\pm$0.10} & \phantom{0}\textbf{8.98}\textsubscript{$\pm$0.04} & \underline{13.42}\textsubscript{$\pm$0.36} & \phantom{0}8.69\textsubscript{$\pm$0.05} & 19.40\textsubscript{$\pm$0.28} \\


& SS+ ($\lambda_g=0$) & \phantom{0}9.25\textsubscript{$\pm$0.06} & \phantom{0}\underline{8.11}\textsubscript{$\pm$0.18} & \phantom{0}8.81\textsubscript{$\pm$0.05} & 13.86\textsubscript{$\pm$0.31} & \phantom{0}\underline{8.76}\textsubscript{$\pm$0.06} & 19.52\textsubscript{$\pm$0.24} \\

& SS+ & \phantom{0}9.26\textsubscript{$\pm$0.04} & \phantom{0}8.26\textsubscript{$\pm$0.28} & \phantom{0}8.85\textsubscript{$\pm$0.05} & 13.81\textsubscript{$\pm$0.27} & \phantom{0}8.63\textsubscript{$\pm$0.05} & 19.85\textsubscript{$\pm$0.18} \\


& ASS ($\lambda_g=0$) & \phantom{0}9.12\textsubscript{$\pm$0.05} & \phantom{0}8.90\textsubscript{$\pm$0.07} & \phantom{0}8.82\textsubscript{$\pm$0.03} & 13.65\textsubscript{$\pm$0.70} & \phantom{0}8.52\textsubscript{$\pm$0.04} & \underline{18.14}\textsubscript{$\pm$0.33} \\

& ASS & \phantom{0}\textbf{9.43}\textsubscript{$\pm$0.14} & \phantom{0}\textbf{7.68}\textsubscript{$\pm$0.06} & \phantom{0}\textbf{8.98}\textsubscript{$\pm$0.09} & \textbf{10.97}\textsubscript{$\pm$0.09} & \phantom{0}\underline{8.76}\textsubscript{$\pm$0.05} & \textbf{15.68}\textsubscript{$\pm$0.26} \\

\midrule

\multirow{7}{*}{\rotatebox{90}{CIFAR-100}} & Baseline & 11.02\textsubscript{$\pm$0.07} & 11.49\textsubscript{$\pm$0.21} & \phantom{0}9.45\textsubscript{$\pm$0.05} & 24.98\textsubscript{$\pm$0.48} & \phantom{0}8.50\textsubscript{$\pm$0.09} & 34.92\textsubscript{$\pm$0.63} \\

& SS ($\lambda_g=0$) & \underline{11.04}\textsubscript{$\pm$0.09} & 11.48\textsubscript{$\pm$0.43} & \phantom{0}9.81\textsubscript{$\pm$0.07} & 21.17\textsubscript{$\pm$0.19} & \phantom{0}9.11\textsubscript{$\pm$0.09} & 30.48\textsubscript{$\pm$0.57} \\

& SS & 10.96\textsubscript{$\pm$0.11} & 11.04\textsubscript{$\pm$0.15} & \phantom{0}\underline{9.99}\textsubscript{$\pm$0.09} & 20.72\textsubscript{$\pm$0.33} & \phantom{0}9.23\textsubscript{$\pm$0.07} & 31.54\textsubscript{$\pm$1.12} \\


& SS+ ($\lambda_g=0$) & 10.84\textsubscript{$\pm$0.09} & 11.48\textsubscript{$\pm$0.13} & \phantom{0}9.95\textsubscript{$\pm$0.07} & 21.22\textsubscript{$\pm$0.57} & \phantom{0}9.14\textsubscript{$\pm$0.05} & 31.02\textsubscript{$\pm$1.00} \\

& SS+ & 10.94\textsubscript{$\pm$0.10} & \underline{10.94}\textsubscript{$\pm$0.12} & \phantom{0}9.88\textsubscript{$\pm$0.06} & 22.72\textsubscript{$\pm$0.37} & \phantom{0}9.23\textsubscript{$\pm$0.14} & 31.40\textsubscript{$\pm$0.22} \\


& ASS ($\lambda_g=0$) & 10.82\textsubscript{$\pm$0.10} & 11.29\textsubscript{$\pm$0.12} & \phantom{0}9.96\textsubscript{$\pm$0.11} & \underline{18.90}\textsubscript{$\pm$0.41} & \phantom{0}\underline{9.45}\textsubscript{$\pm$0.05} & \underline{25.77}\textsubscript{$\pm$0.95} \\

& ASS & \textbf{11.19}\textsubscript{$\pm$0.09} & \phantom{0}\textbf{9.88}\textsubscript{$\pm$0.07} & \textbf{10.25}\textsubscript{$\pm$0.06} & \textbf{16.11}\textsubscript{$\pm$0.25} & \phantom{0}\textbf{9.78}\textsubscript{$\pm$0.08} & \textbf{21.30}\textsubscript{$\pm$0.15} \\

\bottomrule
\end{tabular}
\label{tab:sst}
\end{table}

\clearpage
\section{Loss Functions}
\label{sec:loss}

AugSelf-GAN adopts all kinds of augmentations of DiffAugment as the self-supervised signals by default, thus the self-supervised loss functions of the discriminator and the generator (\cref{eq:ass-d,eq:ass-g}) actually each comprise three sub self-supervised loss functions.
Specifically, for AugSelf-GAN that uses color, translation, and cutout as self-supervision, the objective functions are:
\begin{IEEEeqnarray}{rCl}
    \min_{D,\hat{D}_{\mathrm{color}},\hat{D}_{\mathrm{translation}},\hat{D}_{\mathrm{cutout}}} \mathcal{L}_D^{\mathrm{da}} &+& \lambda_d \cdot \left(\mathcal{L}_{\hat{D}_{\mathrm{color}}}^{\mathrm{color}} + \mathcal{L}_{\hat{D}_{\mathrm{translation}}}^{\mathrm{translation}} + \mathcal{L}_{\hat{D}_{\mathrm{cutout}}}^{\mathrm{cutout}}\right), \\ 
    \min_G \mathcal{L}_G^{\mathrm{da}} &+& \lambda_g \cdot \left(\mathcal{L}_G^{\mathrm{color}} + \mathcal{L}_G^{\mathrm{translation}} + \mathcal{L}_G^{\mathrm{cutout}}\right),
\end{IEEEeqnarray}
where $\mathcal{L}_{\hat{D}_{\mathrm{color}}}^{\mathrm{color}}$, $\mathcal{L}_{\hat{D}_{\mathrm{translation}}}^{\mathrm{translation}}$, $\mathcal{L}_{\hat{D}_{\mathrm{cutout}}}^{\mathrm{cutout}}$, $\mathcal{L}_G^{\mathrm{color}}$, $\mathcal{L}_G^{\mathrm{translation}}$, and $\mathcal{L}_G^{\mathrm{cutout}}$ are defined as:
\begin{IEEEeqnarray}{rCl}
    \mathcal{L}^{\mathrm{color}}_{\hat{D}_{\mathrm{color}}} & = & \mathbb{E}_{\mathbf{x},\boldsymbol{\omega}}\left[\|\hat{D}_{\mathrm{color}}(T(\mathbf{x};\boldsymbol{\omega}),\mathbf{x})-\boldsymbol{\omega}^+_{\mathrm{color}}\|_2^2\right] \nonumber \\ & + & \mathbb{E}_{\mathbf{z},\boldsymbol{\omega}}\left[\|\hat{D}_{\mathrm{color}}(T(G(\mathbf{z});\boldsymbol{\omega}),G(\mathbf{z}))-\boldsymbol{\omega}^-_{\mathrm{color}}\|_2^2\right], \\
    \mathcal{L}^{\mathrm{translation}}_{\hat{D}_{\mathrm{translation}}} & = & \mathbb{E}_{\mathbf{x},\boldsymbol{\omega}}\left[\|\hat{D}_{\mathrm{translation}}(T(\mathbf{x};\boldsymbol{\omega}),\mathbf{x})-\boldsymbol{\omega}^+_{\mathrm{translation}}\|_2^2\right] \nonumber \\ & + & \mathbb{E}_{\mathbf{z},\boldsymbol{\omega}}\left[\|\hat{D}_{\mathrm{translation}}(T(G(\mathbf{z});\boldsymbol{\omega}),G(\mathbf{z}))-\boldsymbol{\omega}^-_{\mathrm{translation}}\|_2^2\right], \\
    \mathcal{L}^{\mathrm{cutout}}_{\hat{D}_{\mathrm{cutout}}} & = & \mathbb{E}_{\mathbf{x},\boldsymbol{\omega}}\left[\|\hat{D}_{\mathrm{cutout}}(T(\mathbf{x};\boldsymbol{\omega}),\mathbf{x})-\boldsymbol{\omega}^+_{\mathrm{cutout}}\|_2^2\right] \nonumber \\ & + & \mathbb{E}_{\mathbf{z},\boldsymbol{\omega}}\left[\|\hat{D}_{\mathrm{cutout}}(T(G(\mathbf{z});\boldsymbol{\omega}),G(\mathbf{z}))-\boldsymbol{\omega}^-_{\mathrm{cutout}}\|_2^2\right], \\
    \mathcal{L}^{\mathrm{color}}_{G} & = & \mathbb{E}_{\mathbf{z},\boldsymbol{\omega}}\left[\|\hat{D}_{\mathrm{color}}(T(G(\mathbf{z});\boldsymbol{\omega}),G(\mathbf{z}))-\boldsymbol{\omega}^+_\mathrm{color}\|_2^2\right] \nonumber \\ & - & \mathbb{E}_{\mathbf{z},\boldsymbol{\omega}}\left[\|\hat{D}_{\mathrm{color}}(T(G(\mathbf{z});\boldsymbol{\omega}),G(\mathbf{z}))-\boldsymbol{\omega}^-_\mathrm{color}\|_2^2\right], \\
    \mathcal{L}^{\mathrm{translation}}_{G} & = & \mathbb{E}_{\mathbf{z},\boldsymbol{\omega}}\left[\|\hat{D}_{\mathrm{translation}}(T(G(\mathbf{z});\boldsymbol{\omega}),G(\mathbf{z}))-\boldsymbol{\omega}^+_\mathrm{translation}\|_2^2\right] \nonumber \\ & - & \mathbb{E}_{\mathbf{z},\boldsymbol{\omega}}\left[\|\hat{D}_{\mathrm{translation}}(T(G(\mathbf{z});\boldsymbol{\omega}),G(\mathbf{z}))-\boldsymbol{\omega}^-_\mathrm{translation}\|_2^2\right], \\
    \mathcal{L}^{\mathrm{cutout}}_{G} & = & \mathbb{E}_{\mathbf{z},\boldsymbol{\omega}}\left[\|\hat{D}_{\mathrm{cutout}}(T(G(\mathbf{z});\boldsymbol{\omega}),G(\mathbf{z}))-\boldsymbol{\omega}^+_\mathrm{cutout}\|_2^2\right] \nonumber \\ & - & \mathbb{E}_{\mathbf{z},\boldsymbol{\omega}}\left[\|\hat{D}_{\mathrm{cutout}}(T(G(\mathbf{z});\boldsymbol{\omega}),G(\mathbf{z}))-\boldsymbol{\omega}^-_\mathrm{cutout}\|_2^2\right],
\end{IEEEeqnarray}
where $\hat{D}_{\mathrm{color}}=\varphi_{\mathrm{color}}\circ \phi$, $\hat{D}_{\mathrm{translation}}=\varphi_{\mathrm{translation}}\circ \phi$, and $\hat{D}_{\mathrm{cutout}}=\varphi_{\mathrm{cutout}}\circ \phi$ share the backbone $\phi$ but differ in the heads $\varphi_{\mathrm{color}}$, $\varphi_{\mathrm{translation}}$, or $\varphi_{\mathrm{cutout}}$, respectively.
For the simplicity of notations, we write \cref{eq:ass-d,eq:ass-g} in the main text as the objective function.

\section{Ablation Studies}
\label{sec:as}

\paragraph{Self-supervised tasks.}

We introduce two non-adversarial self-supervised tasks for comparison. The first version is that the discriminator only learns self-supervision on real data, defined as:
\begin{IEEEeqnarray}{rCl}\label{eq:ss-d}
    \mathcal{L}^{\mathrm{ss}}_{\hat{D}} = \mathbb{E}_{\mathbf{x}\sim p_\mathrm{data}(\mathbf{x}),\boldsymbol{\omega}\sim p(\boldsymbol{\omega})}\left[\|\hat{D}(T(\mathbf{x};\boldsymbol{\omega}),\mathbf{x})-\boldsymbol{\omega}\|_2^2\right].
\end{IEEEeqnarray}
The second version is that the discriminator learns self-supervised tasks on both real and generated data simultaneously, given by:
\begin{IEEEeqnarray}{rCl}\label{eq:ssp-d}
    \mathcal{L}^{\mathrm{ss}}_{\hat{D}} = \mathbb{E}_{\mathbf{x},\boldsymbol{\omega}}\left[\|\hat{D}(T(\mathbf{x};\boldsymbol{\omega}),\mathbf{x})-\boldsymbol{\omega}\|_2^2\right]+\mathbb{E}_{\mathbf{z},\boldsymbol{\omega}}\left[\|\hat{D}(T(G(\mathbf{z});\boldsymbol{\omega}),G(\mathbf{z}))-\boldsymbol{\omega}\|_2^2\right].
\end{IEEEeqnarray}
For both versions, the generator is encouraged to produce augmentation-recognizable data, as follows:
\begin{IEEEeqnarray}{rCl}\label{eq:ss-g}
    \mathcal{L}^{\mathrm{ss}}_{G} = \mathbb{E}_{\mathbf{z}\sim p(\mathbf{z}),\boldsymbol{\omega}\sim p(\boldsymbol{\omega})}\left[\|\hat{D}(T(G(\mathbf{z});\boldsymbol{\omega}),G(\mathbf{z}))-\boldsymbol{\omega}\|_2^2\right].
\end{IEEEeqnarray}
According to the self-supervised task, we denote different approaches as SS (\cref{eq:ss-d,eq:ss-g}), SS+ (\cref{eq:ssp-d,eq:ss-g}), and ASS (\cref{eq:ass-d,eq:ass-g}, short for adversarial self-supervised learning, i.e., the proposed AugSelf-GAN). 

\cref{tab:sst} reports the comparison between methods with different self-supervised tasks. We also conducted generator-free self-supervised learning experiments by setting the hyper-parameter as $\lambda_g=0$ on each kind of self-supervised task. According to the FID score, ASS is significantly superior to SS and SS+. Even without the self-supervised task of the generator, ASS ($\lambda_g=0$) outperforms SS and SS+ that include generator self-supervised tasks in limited (10\% and 20\%) data, and the introduction of generator self-supervised tasks further expands this advantage.

\paragraph{Self-supervised signals.}

\begin{table}[t]
\caption{IS and FID of AugSelf-BigGAN with different self-supervised signals on CIFAR-10 and CIFAR-100 with full and limited training data. The self-supervised signal to be predicted is marked by the symbol \cmark. Notice that all methods adopt color, translation, and cutout as data augmentation.}
\centering
\begin{small}
\begin{tabular}{cccccccccc}

\toprule

& \multicolumn{3}{c}{Self-Supervised Signals} & \multicolumn{2}{c}{100\% training data} & \multicolumn{2}{c}{20\% training data} & \multicolumn{2}{c}{10\% training data} \\
\cmidrule(lr){2-4} \cmidrule(lr){5-6} \cmidrule(lr){7-8} \cmidrule(lr){9-10}
& color & trans. & cutout & IS ($\uparrow$) & FID ($\downarrow$) & IS ($\uparrow$) & FID ($\downarrow$) & IS ($\uparrow$) & FID ($\downarrow$) \\

\midrule

\multirow{8}{*}{\rotatebox{90}{CIFAR-10}} & \xmark & \xmark & \xmark & \phantom{0}9.29\textsubscript{$\pm$.02} & \phantom{0}8.48\textsubscript{$\pm$.13} & \phantom{0}8.84\textsubscript{$\pm$.12} & 15.14\textsubscript{$\pm$.47} & \phantom{0}8.80\textsubscript{$\pm$.01} & 20.60\textsubscript{$\pm$.13} \\

& \xmark & \xmark & \cmark & \phantom{0}9.30\textsubscript{$\pm$.05} & \phantom{0}7.48\textsubscript{$\pm$.07} & \phantom{0}8.94\textsubscript{$\pm$.07} & 11.73\textsubscript{$\pm$.27} & \phantom{0}8.73\textsubscript{$\pm$.12} & 16.66\textsubscript{$\pm$.39} \\

& \xmark & \cmark & \xmark & \phantom{0}9.39\textsubscript{$\pm$.07} & \phantom{0}7.51\textsubscript{$\pm$.12} & \phantom{0}8.95\textsubscript{$\pm$.05} & 11.82\textsubscript{$\pm$.21} & \phantom{0}8.80\textsubscript{$\pm$.03} & 16.27\textsubscript{$\pm$.35} \\

& \xmark & \cmark & \cmark & \phantom{0}9.38\textsubscript{$\pm$.05} & \phantom{0}\textbf{7.41}\textsubscript{$\pm$.11} & \phantom{0}8.87\textsubscript{$\pm$.04} & 11.19\textsubscript{$\pm$.08} & \phantom{0}8.63\textsubscript{$\pm$.08} & 16.30\textsubscript{$\pm$.57} \\

& \cmark & \xmark & \xmark & \phantom{0}\textbf{9.50}\textsubscript{$\pm$.07} & \phantom{0}7.57\textsubscript{$\pm$.07} & \phantom{0}\textbf{8.99}\textsubscript{$\pm$.06} & 11.38\textsubscript{$\pm$.14} & \phantom{0}8.72\textsubscript{$\pm$.09} & 16.50\textsubscript{$\pm$.14} \\

& \cmark & \xmark & \cmark & \phantom{0}9.41\textsubscript{$\pm$.07} & \phantom{0}7.51\textsubscript{$\pm$.05} & \phantom{0}8.92\textsubscript{$\pm$.12} & 11.20\textsubscript{$\pm$.17} & \phantom{0}\textbf{8.83}\textsubscript{$\pm$.07} & 15.55\textsubscript{$\pm$.21} \\

& \cmark & \cmark & \xmark & \phantom{0}9.42\textsubscript{$\pm$.06} & \phantom{0}7.43\textsubscript{$\pm$.06} & \phantom{0}8.91\textsubscript{$\pm$.09} & 11.19\textsubscript{$\pm$.20} & \phantom{0}8.58\textsubscript{$\pm$.02} & \textbf{15.17}\textsubscript{$\pm$.15} \\

& \cmark & \cmark & \cmark & \phantom{0}9.43\textsubscript{$\pm$.14} & \phantom{0}7.68\textsubscript{$\pm$.06} & \phantom{0}8.98\textsubscript{$\pm$.09} & \textbf{10.97}\textsubscript{$\pm$.09} & \phantom{0}8.76\textsubscript{$\pm$.05} & 15.68\textsubscript{$\pm$.26} \\

\midrule

\multirow{8}{*}{\rotatebox{90}{CIFAR-100}} & \xmark & \xmark & \xmark & 11.02\textsubscript{$\pm$.07} & 11.49\textsubscript{$\pm$.21} & \phantom{0}9.45\textsubscript{$\pm$.05} & 24.98\textsubscript{$\pm$.48} & \phantom{0}8.50\textsubscript{$\pm$.09} & 34.92\textsubscript{$\pm$.63} \\

& \xmark & \xmark & \cmark & 11.16\textsubscript{$\pm$.13} & 10.03\textsubscript{$\pm$.14} & 10.18\textsubscript{$\pm$.09} & 19.45\textsubscript{$\pm$.62} & \phantom{0}9.46\textsubscript{$\pm$.12} & 25.99\textsubscript{$\pm$.62} \\

& \xmark & \cmark & \xmark & \textbf{11.35}\textsubscript{$\pm$.10} & \phantom{0}9.88\textsubscript{$\pm$.08} & 10.01\textsubscript{$\pm$.07} & 18.39\textsubscript{$\pm$.07} & \phantom{0}9.29\textsubscript{$\pm$.10} & 26.50\textsubscript{$\pm$.36} \\

& \xmark & \cmark & \cmark & 11.26\textsubscript{$\pm$.07} & \phantom{0}9.75\textsubscript{$\pm$.05} & \textbf{10.35}\textsubscript{$\pm$.20} & 16.88\textsubscript{$\pm$.46} & \phantom{0}9.92\textsubscript{$\pm$.08} & 23.09\textsubscript{$\pm$.72} \\

& \cmark & \xmark & \xmark & 11.20\textsubscript{$\pm$.08} & 10.01\textsubscript{$\pm$.13} & \phantom{0}9.90\textsubscript{$\pm$.11} & 18.16\textsubscript{$\pm$.62} & \phantom{0}9.39\textsubscript{$\pm$.04} & 21.48\textsubscript{$\pm$.14} \\

& \cmark & \xmark & \cmark & 11.17\textsubscript{$\pm$.12} & 10.12\textsubscript{$\pm$.20} & 10.14\textsubscript{$\pm$.09} & 17.11\textsubscript{$\pm$.62} & \phantom{0}\textbf{9.94}\textsubscript{$\pm$.12} & 24.52\textsubscript{$\pm$.76} \\

& \cmark & \cmark & \xmark & 11.26\textsubscript{$\pm$.16} & \phantom{0}\textbf{9.65}\textsubscript{$\pm$.04} & 10.21\textsubscript{$\pm$.12} & 17.32\textsubscript{$\pm$.51} & \phantom{0}9.92\textsubscript{$\pm$.06} & 22.94\textsubscript{$\pm$.17} \\

& \cmark & \cmark & \cmark & 11.19\textsubscript{$\pm$.09} & \phantom{0}9.88\textsubscript{$\pm$.07} & 10.25\textsubscript{$\pm$.06} & \textbf{16.11}\textsubscript{$\pm$.25} & \phantom{0}9.78\textsubscript{$\pm$.08} & \textbf{21.30}\textsubscript{$\pm$.15} \\

\bottomrule
\end{tabular}
\end{small}
\label{tab:sss}
\end{table}

We empirically analyze the role of different augmentation parameters as self-supervised signals for AugSelf-GAN. All methods employ three types of data augmentations (color, translation, and cutout), with the difference lying in the predicted self-supervised signals. According to the FID scores in \cref{tab:sss}, predicting any augmentation parameters can significantly improve the final generative performance compared to the baseline. Although there is no significant difference among them, we still choose to predict all augmentation parameters as the default setting for AugSelf-GAN. This experiment demonstrates that the self-supervised task itself plays a decisive role in training AugSelf-GAN, rather than the specific predicted augmentations. Therefore, we believe that our method is generalizable and can be extended to other advanced data augmentations.

\paragraph{Network architectures.}

\begin{table}[t]
\caption{IS and FID of AugSelf-BigGAN with different architectures and fusions of $\varphi$ on CIFAR-10 and CIFAR-100 with 10\% training data. The best result is \textbf{bold} and the second best is \underline{underlined}. Input can be concatenation $[\phi(\mathbf{x}),\phi(\hat{\mathbf{x}})]\in\mathbb{R}^{2d}$, subtraction $\phi(\hat{\mathbf{x}})-\phi(\mathbf{x})$, and augmentation $\phi(\hat{\mathbf{x}})$.}
\centering
\begin{tabular}{rrcccc}

\toprule

\multirow{2}*{Architecture} & \multirow{2}*{Input} & \multicolumn{2}{c}{CIFAR-10 10\% data} & \multicolumn{2}{c}{CIFAR-100 10\% data} \\
\cmidrule(lr){3-4} \cmidrule(lr){5-6}
& & IS ($\uparrow$) & FID ($\downarrow$) & IS ($\uparrow$) & FID ($\downarrow$) \\

\midrule

two-layer MLP & concatenation & 8.54$\pm$0.07 & 18.47$\pm$0.37 & 8.65$\pm$0.11 & 30.15$\pm$0.47 \\

two-layer MLP & subtraction & 8.62$\pm$0.07 & \underline{16.94}$\pm$0.49 & 8.88$\pm$0.06 & 28.63$\pm$0.36 \\

two-layer MLP & augmentation & \underline{8.78}$\pm$0.05 & 19.06$\pm$0.81 & 8.73$\pm$0.10 & 29.40$\pm$0.91 \\

linear layer & concatenation & \textbf{8.85}$\pm$0.11 & 17.52$\pm$0.20 & 9.37$\pm$0.09 & 25.22$\pm$0.25 \\

linear layer & subtraction & 8.76$\pm$0.05 & \textbf{15.68}$\pm$0.26 & \underline{9.78}$\pm$0.08 & \textbf{21.30}$\pm$0.15 \\

linear layer & augmentation & 8.66$\pm$0.09 & 20.29$\pm$0.36 & \textbf{9.92}$\pm$0.10 & \underline{24.47}$\pm$0.58 \\

bilinear layer & - & 8.57$\pm$0.04 & 25.27$\pm$0.16 & 9.03$\pm$0.03 & 26.72$\pm$0.17 \\

\bottomrule
\end{tabular}
\label{tab:archtecture}
\end{table}

We investigate the impact of different network architectures (two-layer multi-layer perceptron (MLP), linear layer, and bilinear layer) and input approaches (concatenation $[\phi(\mathbf{x}),\phi(\hat{\mathbf{x}})]$, subtraction $\phi(\hat{\mathbf{x}})-\phi(\mathbf{x})$, and augmented samples only $\phi(\hat{\mathbf{x}})$) on AugSelf-GAN. As reported in \cref{tab:archtecture}, we found that more complicated architectures of the predict head $\varphi$ such as two-layer MLP with input concatenation ($\hat{D}(T(\mathbf{x},\boldsymbol{\omega}),\mathbf{x})=\texttt{MLP}([\phi(T(\mathbf{x},\boldsymbol{\omega})),\phi(\mathbf{x})])$) or bilinear layer ($\hat{D}(T(\mathbf{x};\boldsymbol{\omega}),\mathbf{x})=\phi(T(\mathbf{x};\boldsymbol{\omega}))^{\top}W\phi(\mathbf{x}), W\in\mathbb{R}^{d\times d}$) works worse than the simple linear layer with input subtraction ($\hat{D}(T(\mathbf{x};\boldsymbol{\omega}),\mathbf{x})=\varphi(\phi(T(\mathbf{x};\boldsymbol{\omega}))-\phi(\mathbf{x}))$), which is the design of AugSelf-GAN. The reason might be that complicated architectures of the head $\varphi$ actually discourage the backbone $\phi$ from capturing rich and linear representations, resulting in poor generalization of discriminators.

\paragraph{Generator loss functions.}

\begin{table}[t]
\caption{IS and FID of AugSelf-BigGAN with different loss functions on CIFAR-10 and CIFAR-100 with full and limited training data. The best result is \textbf{bold} and the second best is \underline{underlined}.}
\centering
\begin{tabular}{clcccccc}

\toprule

& \multirow{2}{*}{Method} & \multicolumn{2}{c}{100\% training data}& \multicolumn{2}{c}{20\% training data} & \multicolumn{2}{c}{10\% training data} \\
\cmidrule(lr){3-4} \cmidrule(lr){5-6} \cmidrule(lr){7-8}
& & IS ($\uparrow$) & FID ($\downarrow$) & IS ($\uparrow$) & FID ($\downarrow$) & IS ($\uparrow$) & FID ($\downarrow$) \\

\midrule

\multirow{5}{*}{\rotatebox{90}{CIFAR-10}} & $\lambda_d=0,\lambda_g=0$ & \phantom{0}9.29\textsubscript{$\pm$.02} & \phantom{0}8.48\textsubscript{$\pm$.13} & \phantom{0}8.84\textsubscript{$\pm$.12} & 15.14\textsubscript{$\pm$.47} & \phantom{0}8.80\textsubscript{$\pm$.01} & 20.60\textsubscript{$\pm$.13} \\

& $\lambda_d=1,\lambda_g=0$ & \phantom{0}9.12\textsubscript{$\pm$.05} & \phantom{0}8.90\textsubscript{$\pm$.07} & \phantom{0}8.82\textsubscript{$\pm$.03} & 13.65\textsubscript{$\pm$.70} & \phantom{0}8.52\textsubscript{$\pm$.04} & 18.14\textsubscript{$\pm$.33} \\

& saturating & \phantom{0}9.27\textsubscript{$\pm$.06} & \phantom{0}8.13\textsubscript{$\pm$.14} & \phantom{0}8.83\textsubscript{$\pm$.04} & 12.76\textsubscript{$\pm$.29} & \phantom{0}8.42\textsubscript{$\pm$.05} & 18.43\textsubscript{$\pm$.21} \\

& non-saturating & \phantom{0}\underline{9.37}\textsubscript{$\pm$.07} & \phantom{0}\textbf{7.60}\textsubscript{$\pm$.08} & \phantom{0}\underline{8.91}\textsubscript{$\pm$.09} & \underline{11.44}\textsubscript{$\pm$.16} & \phantom{0}\underline{8.63}\textsubscript{$\pm$.10} & \underline{15.86}\textsubscript{$\pm$.26} \\

& combination & \phantom{0}\textbf{9.43}\textsubscript{$\pm$.14} & \phantom{0}\underline{7.68}\textsubscript{$\pm$.06} & \phantom{0}\textbf{8.98}\textsubscript{$\pm$.09} & \textbf{10.97}\textsubscript{$\pm$.09} & \phantom{0}\textbf{8.76}\textsubscript{$\pm$.05} & \textbf{15.68}\textsubscript{$\pm$.26} \\

\midrule

\multirow{5}{*}{\rotatebox{90}{CIFAR-100}} & $\lambda_d=0,\lambda_g=0$ & 11.02\textsubscript{$\pm$.07} & 11.49\textsubscript{$\pm$.21} & \phantom{0}9.45\textsubscript{$\pm$.05} & 24.98\textsubscript{$\pm$.48} & \phantom{0}8.50\textsubscript{$\pm$.09} & 34.92\textsubscript{$\pm$.63} \\

& $\lambda_d=1,\lambda_g=0$ & 10.82\textsubscript{$\pm$.10} & 11.29\textsubscript{$\pm$.12} & \phantom{0}9.96\textsubscript{$\pm$.11} & 18.90\textsubscript{$\pm$.41} & \phantom{0}9.45\textsubscript{$\pm$.05} & 25.77\textsubscript{$\pm$.95} \\

& saturating & 10.90\textsubscript{$\pm$.07} & 10.53\textsubscript{$\pm$.21} & \phantom{0}9.71\textsubscript{$\pm$.08} & 18.78\textsubscript{$\pm$.43} & \phantom{0}9.30\textsubscript{$\pm$.09} & 25.92\textsubscript{$\pm$.12} \\

& non-saturating & \textbf{11.22}\textsubscript{$\pm$.14} & \phantom{0}\underline{9.90}\textsubscript{$\pm$.09} & \underline{10.07}\textsubscript{$\pm$.02} & \underline{16.12}\textsubscript{$\pm$.26} & \phantom{0}\textbf{9.81}\textsubscript{$\pm$.10} & \underline{21.99}\textsubscript{$\pm$.87} \\

& combination & \underline{11.19}\textsubscript{$\pm$.09} & \phantom{0}\textbf{9.88}\textsubscript{$\pm$.07} & \textbf{10.25}\textsubscript{$\pm$.06} & \textbf{16.11}\textsubscript{$\pm$.25} & \phantom{0}\underline{9.78}\textsubscript{$\pm$.08} & \textbf{21.30}\textsubscript{$\pm$.15} \\

\bottomrule
\end{tabular}
\label{tab:loss}
\end{table}

We study the effects of AugSelf-GANs with different generator loss functions. As shown in \cref{tab:loss}, The hyper-parameters $\lambda_d=0,\lambda_g=0$ represent the baseline BigGAN + DiffAugment. The generation performance is improved with only augmentation-aware self-supervision for the discriminator ($\lambda_d=1,\lambda_g=0$). The saturating version of self-supervised generator loss shows no significant further improvement. The non-saturating version yields comparable performance with the combination one (AugSelf-GAN), and they both substantially surpass the others. The reason is that the non-saturating loss directly encourages the generator to generate augmentation-predictable real data, providing more informative guidance than the saturating one.

\section{Additional Results}

\paragraph{Training stability.}

\begin{figure}[b]
\setlength{\tabcolsep}{5pt}
\centering
\begin{tabular}{cccc}
    & 100\% training data & 20\% training data & 10\% training data \\ 
    \rotatebox[origin=c]{90}{CIFAR-10} & 
    \begin{minipage}{0.3\textwidth}\includegraphics[width=\linewidth]{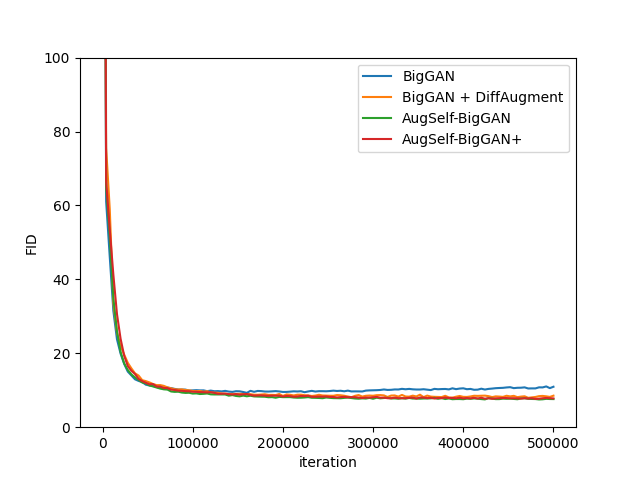}\end{minipage} & 
    \begin{minipage}{0.3\textwidth}\includegraphics[width=\linewidth]{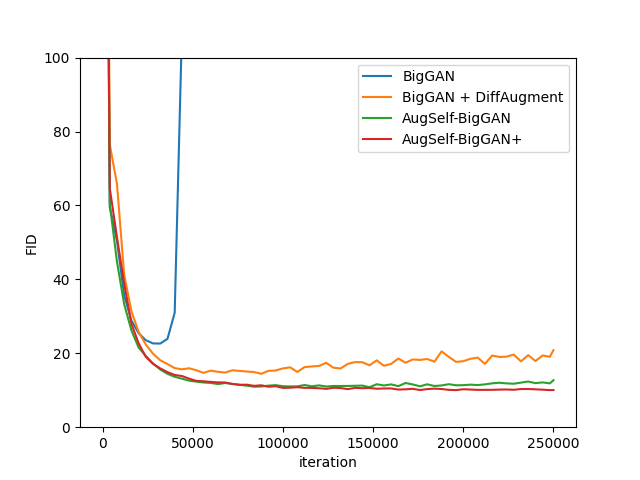}\end{minipage} & 
    \begin{minipage}{0.3\textwidth}\includegraphics[width=\linewidth]{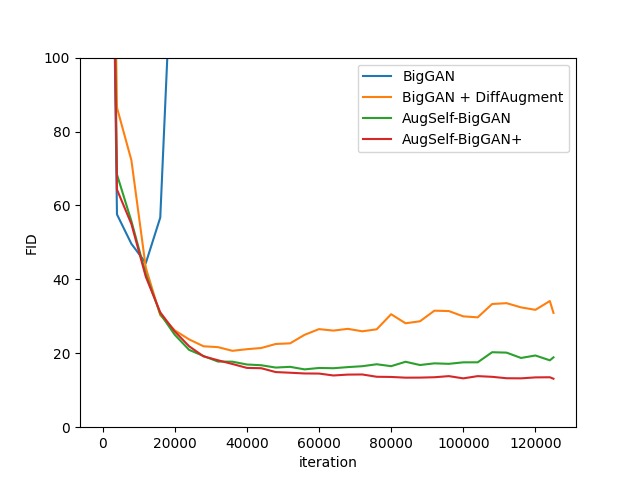}\end{minipage} \\ 

    \rotatebox[origin=c]{90}{CIFAR-100} & 
    \begin{minipage}{0.3\textwidth}\includegraphics[width=\linewidth]{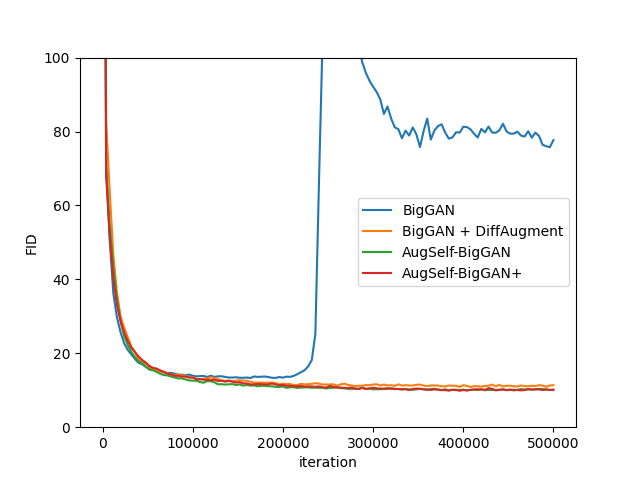}\end{minipage} & 
    \begin{minipage}{0.3\textwidth}\includegraphics[width=\linewidth]{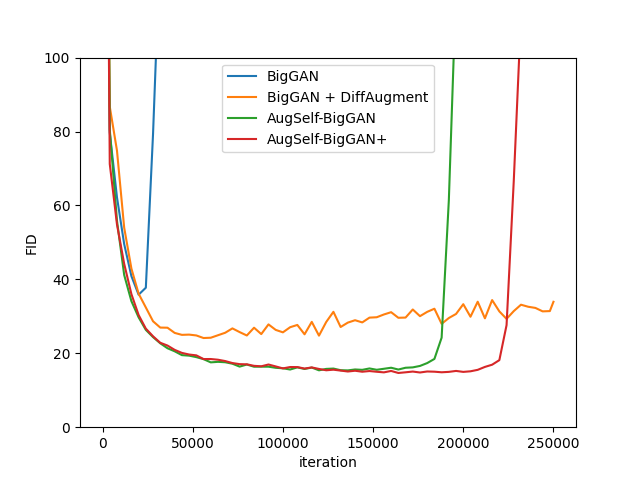}\end{minipage} & 
    \begin{minipage}{0.3\textwidth}\includegraphics[width=\linewidth]{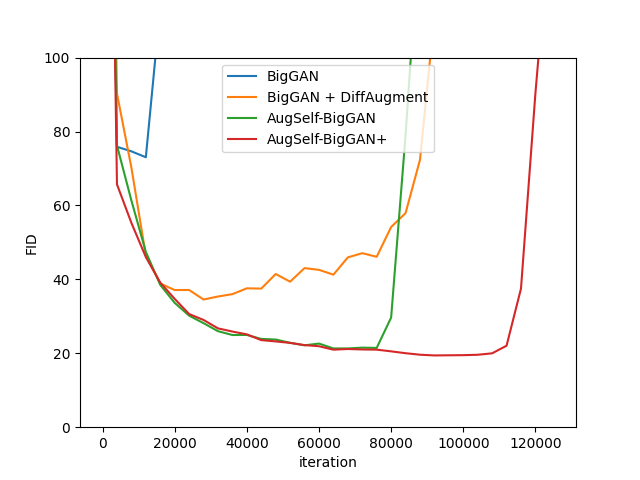}\end{minipage} \\ 
\end{tabular}
\caption{FID plots comparison of AugSelf-GAN and baselines on CIFAR-10 and CIFAR-100.}
\label{fig:fid}
\end{figure}

\cref{fig:fid} shows the FID plots during training of our method and baselines on CIFAR-10 and CIFAR-100. Overall, AugSelf-BigGAN demonstrates a more stable convergence compared to the baselines, while AugSelf-BigGAN+ goes even further.

\paragraph{AFHQ} We also conduct experiments on the AFHQ dataset~\cite{Choi_2020_CVPR} to compare our proposed method with DiffAugment by employing StyleGAN2 as the backbone model. 
The hyperparameters are set as $\lambda_d=0.1$, $\lambda_d=0.5$, and $\lambda_d=0.2$ on Cat, Dog, and Wild domains, respectively, and $\lambda_g=0.1$ on all three domains. 
Quantitative experimental results, as shown in~\cref{tab:afhq}, suggest that AugSelf-StyleGAN2 yields approximately a 7\% improvements in FID across all three domains when compared to DiffAugment.
\cref{fig:afhq} presents a qualitative comparison between these two methods, demonstrating a significant improvement in image quality for our method.

\begin{table}[t]
\setlength{\tabcolsep}{15pt}
\caption{FID comparison on AFHQ. The number within the parentheses represents the percentage of improvement ($\downarrow$) in AugSelf-StyleGAN2 compared to the baseline StyleGAN2 + DiffAugment.}
\centering
\begin{tabular}{lccc}
\toprule
Method & Cat & Dog & Wild \\
\midrule
StyleGAN2~\cite{Karras_2020_CVPR} & 5.13 & 19.4 & 3.48 \\
+ DiffAugment~\cite{NEURIPS2020_55479c55} & 3.49 & 8.75 & 2.69 \\
AugSelf-StyleGAN2 & \textbf{3.23} ($\downarrow$ 7.45\%) & \textbf{8.17} ($\downarrow$ 6.63\%)  & \textbf{2.48} ($\downarrow$ 7.81\%)  \\
\bottomrule
\end{tabular}
\label{tab:afhq}
\end{table}

\begin{figure}[t]
\centering
\begin{tabular}{ccc}
    & StyleGAN2 + DiffAugment & AugSelf-StyleGAN2
    \\ \\
    \rotatebox[origin=c]{90}{Cat} & 
    \begin{minipage}{0.44\textwidth}\includegraphics[width=\linewidth]{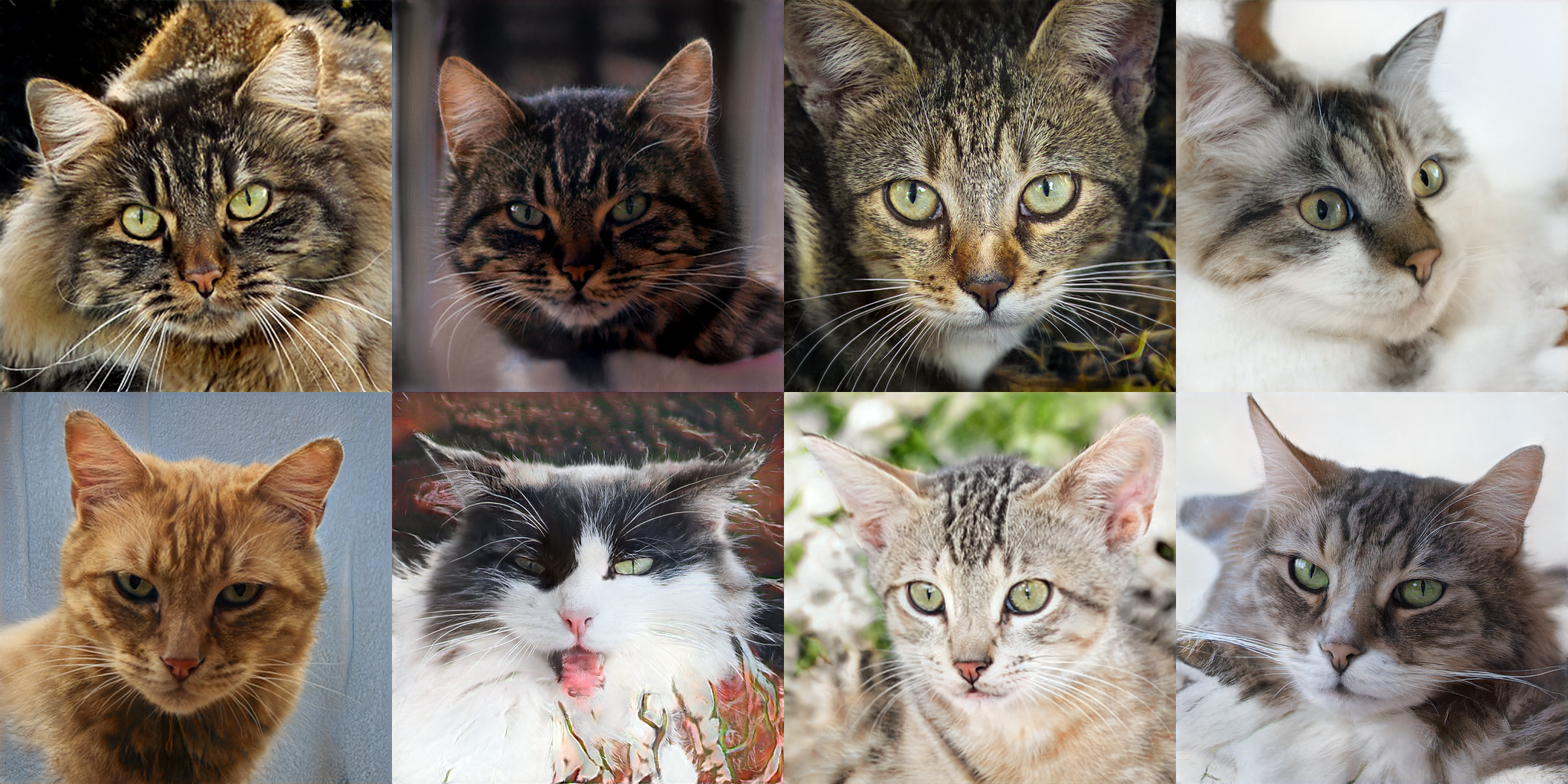}\end{minipage} & 
    \begin{minipage}{0.44\textwidth}\includegraphics[width=\linewidth]{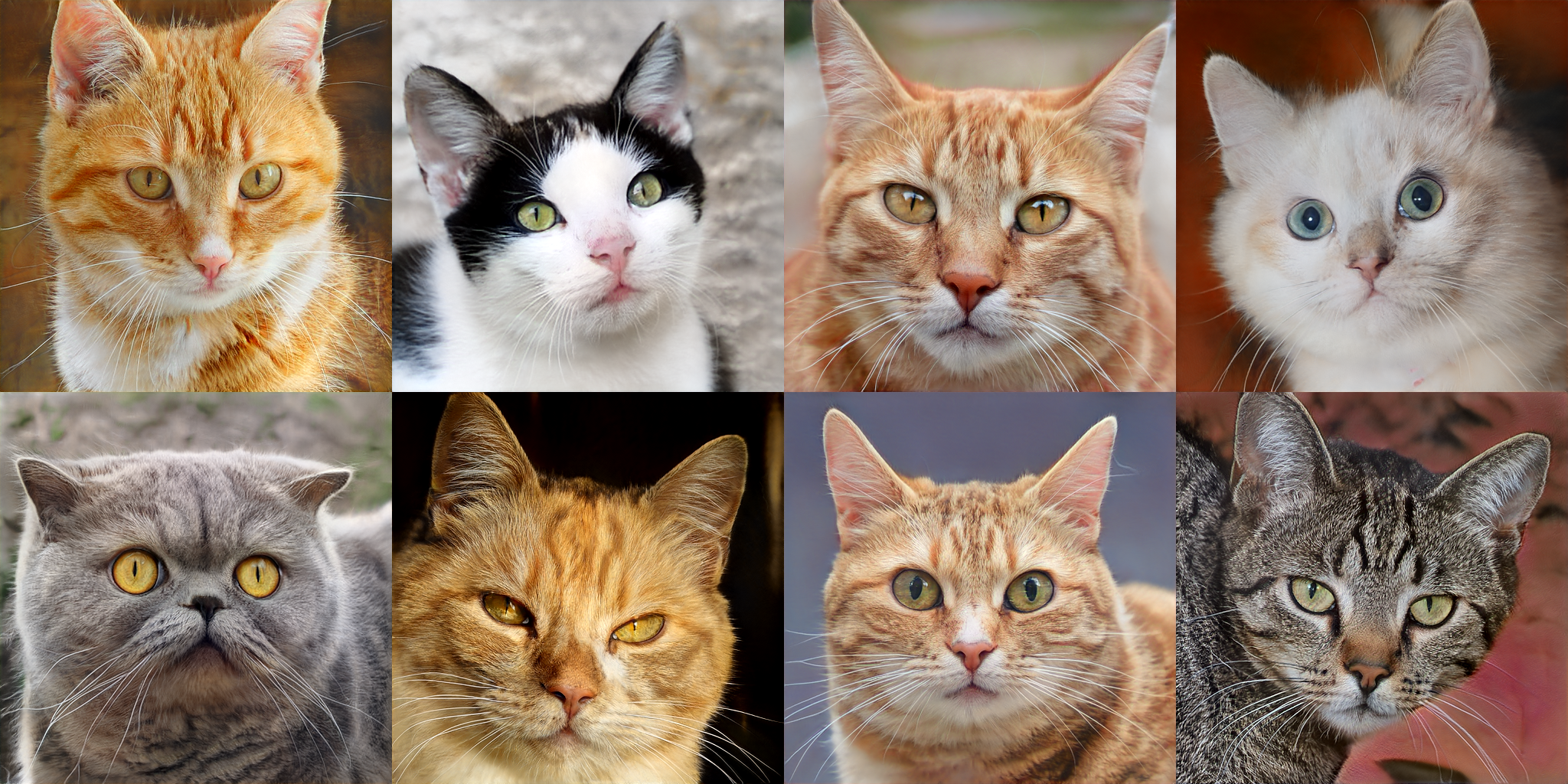}\end{minipage} 
    \\ \\

    \rotatebox[origin=c]{90}{Dog} & 
    \begin{minipage}{0.44\textwidth}\includegraphics[width=\linewidth]{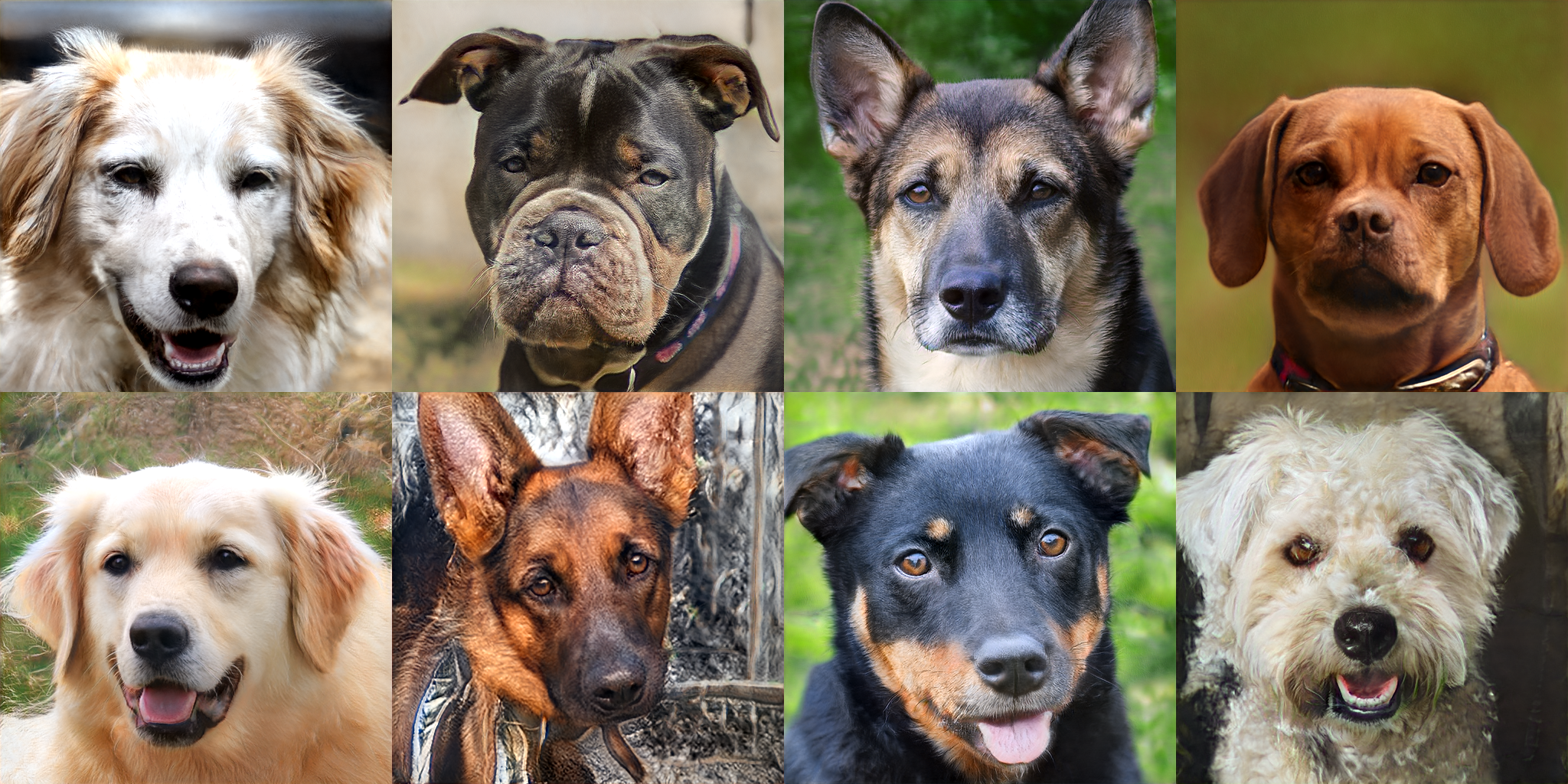}\end{minipage} & 
    \begin{minipage}{0.44\textwidth}\includegraphics[width=\linewidth]{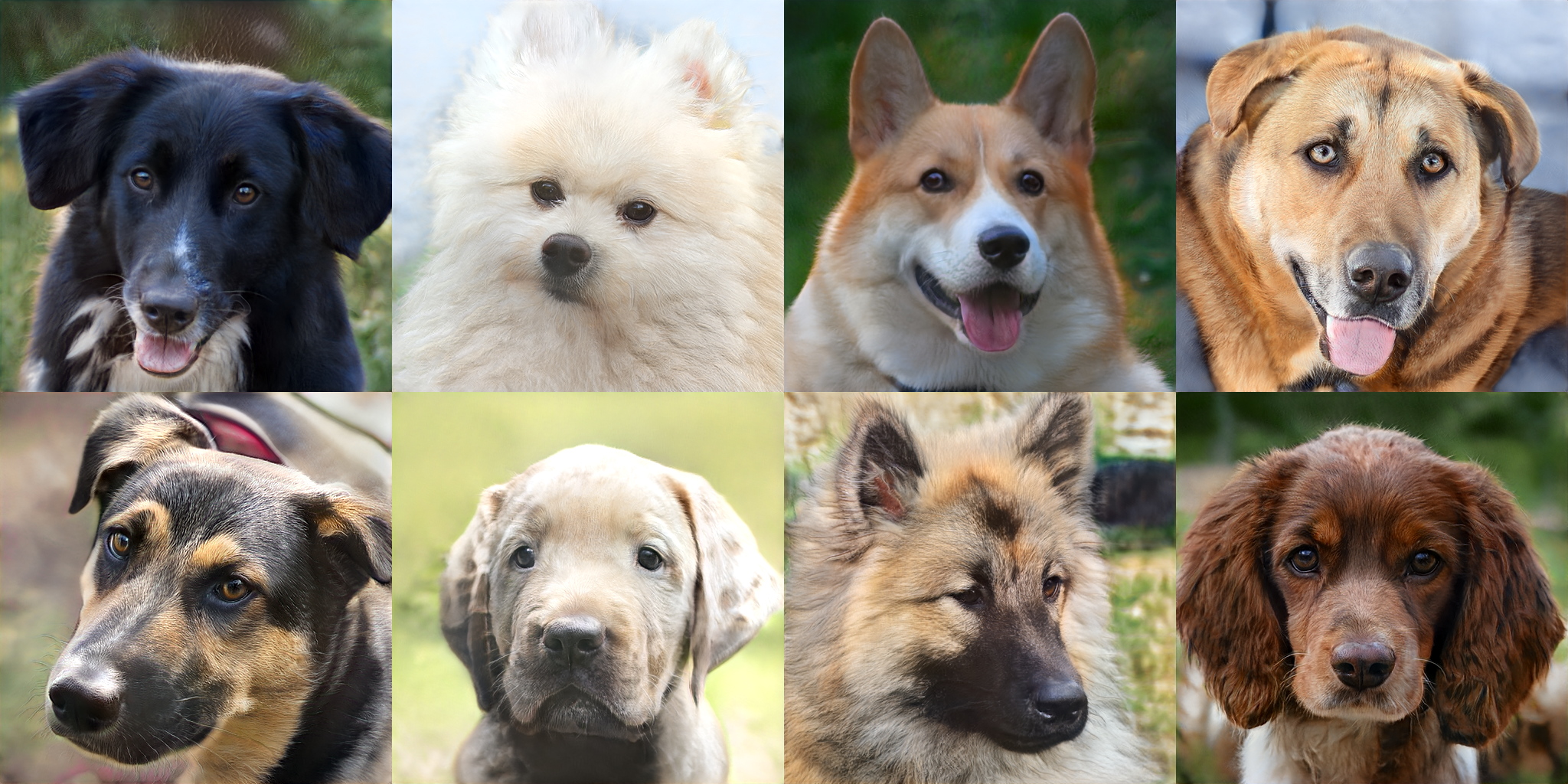}\end{minipage} 
    \\ \\

    \rotatebox[origin=c]{90}{Wild} & 
    \begin{minipage}{0.44\textwidth}\includegraphics[width=\linewidth]{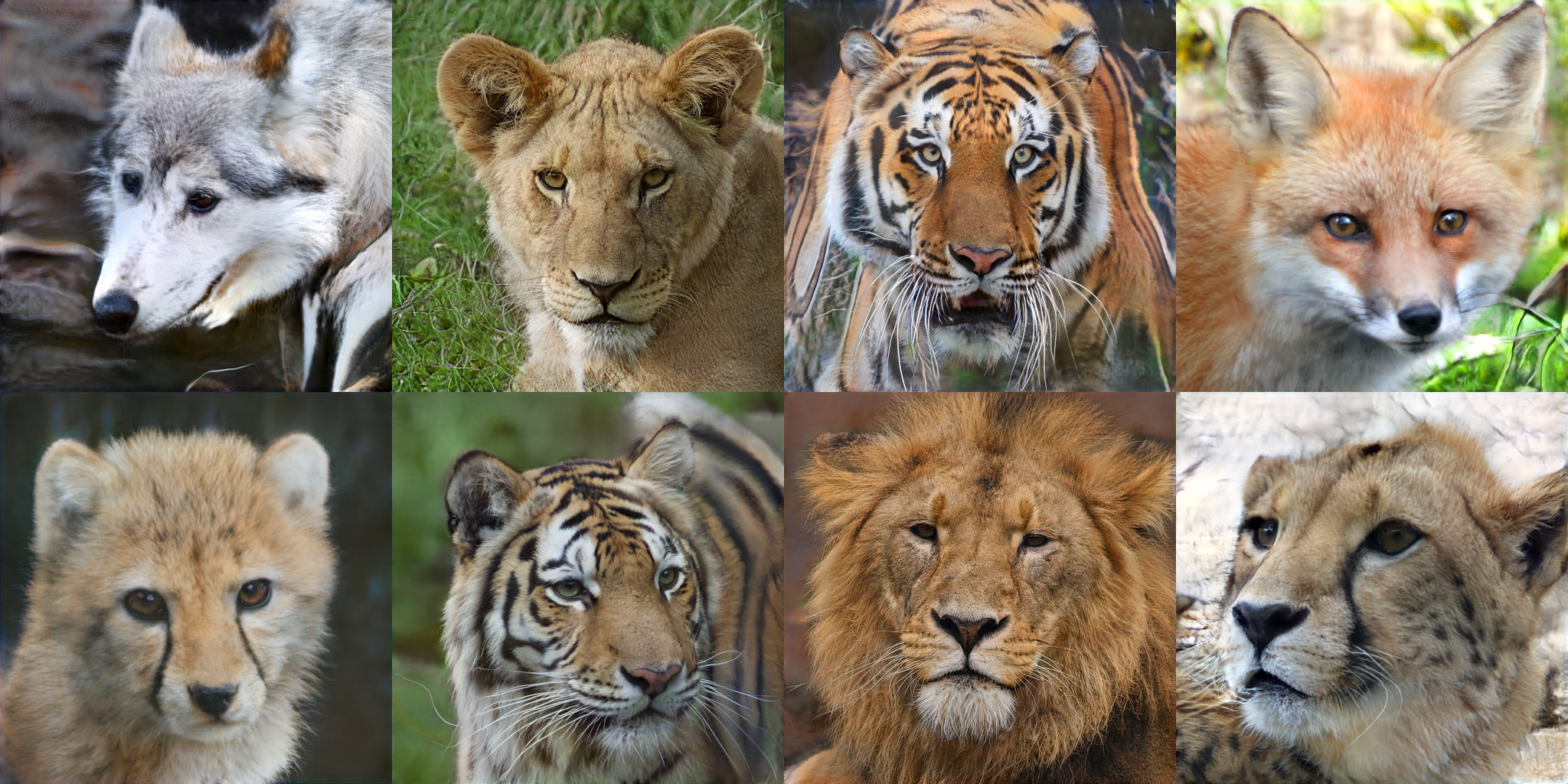}\end{minipage} & 
    \begin{minipage}{0.44\textwidth}\includegraphics[width=\linewidth]{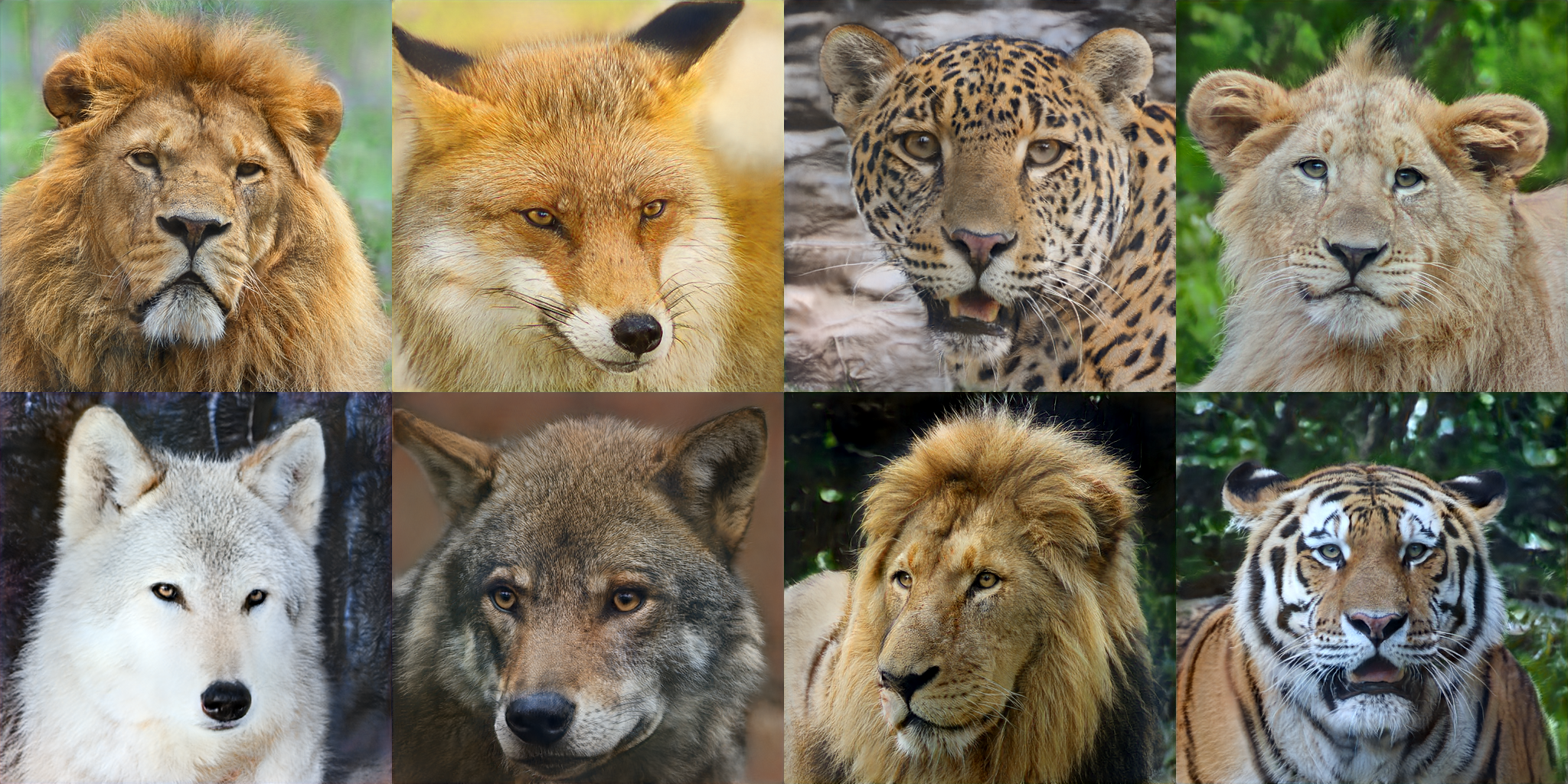}\end{minipage}
    \\ \\

\end{tabular}
\caption{Visual comparison between StyleGAN2+DiffAugment and AugSelf-StyleGAN2 on AFHQ.}
\label{fig:afhq}
\end{figure}

\paragraph{FFHQ and LSUN-Cat} 

\cref{fig:ffhq,fig:lsuncat} present the randomly generated images produced by StyleGAN2 + DiffAugment and AugSelf-StyleGAN2+ on the FFHQ~\cite{Karras_2019_CVPR} and LSUN-Cat~\cite{yu2015lsun} datasets with varying amounts of training data, respectively. Our AugSelf-StyleGAN2+ demonstrates significant superiority over StyleGAN2 + DiffAugment in terms of visual quality of generated images.

\begin{figure}[t]
\centering
\begin{tabular}{ccc}
    & StyleGAN2 + DiffAugment & AugSelf-StyleGAN2+ \\
    \\
    \rotatebox[origin=c]{90}{1k training samples} & 
    \begin{minipage}{0.44\textwidth}\includegraphics[width=\linewidth]{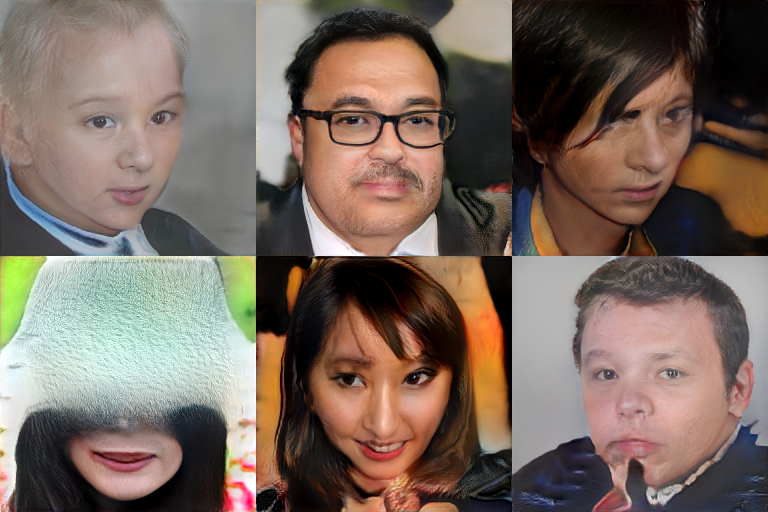}\end{minipage} & 
    \begin{minipage}{0.44\textwidth}\includegraphics[width=\linewidth]{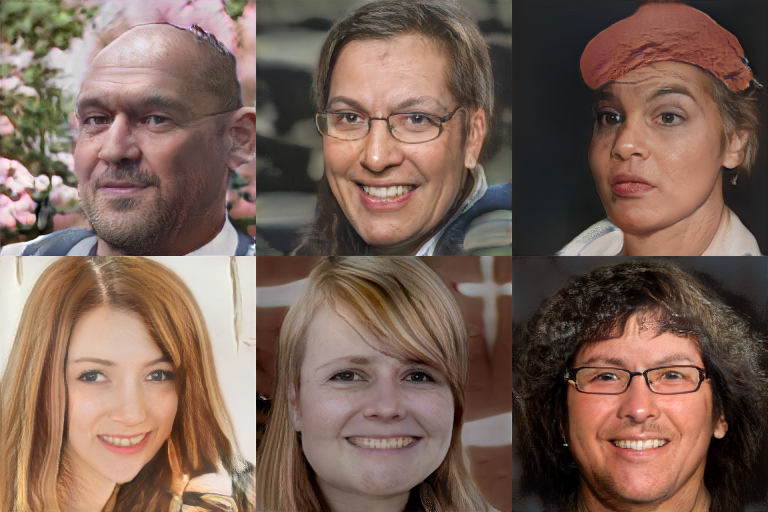}\end{minipage} \\
    \\

    \rotatebox[origin=c]{90}{5k training samples} & 
    \begin{minipage}{0.44\textwidth}\includegraphics[width=\linewidth]{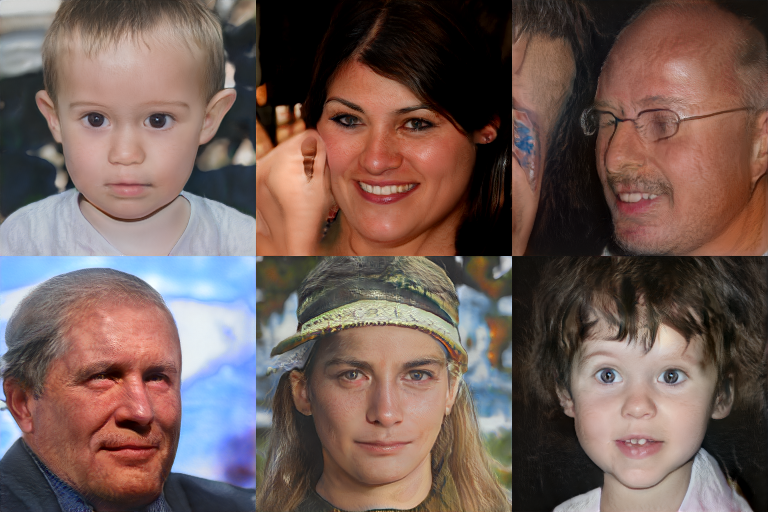}\end{minipage} & 
    \begin{minipage}{0.44\textwidth}\includegraphics[width=\linewidth]{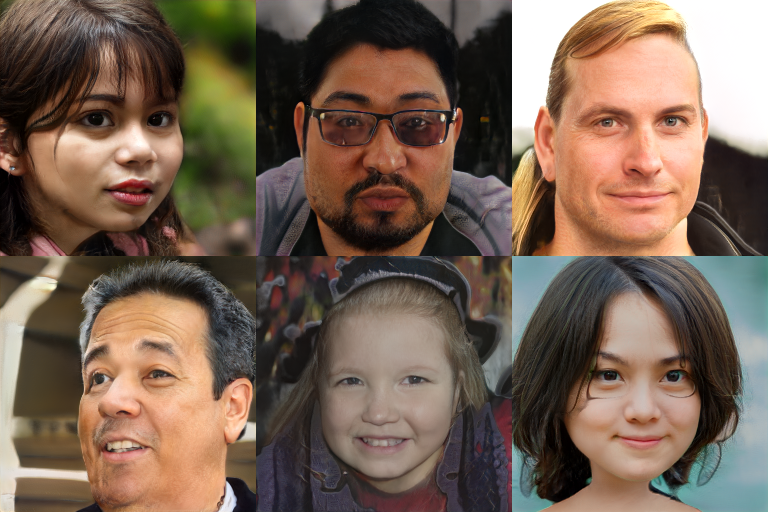}\end{minipage} \\
    \\

    \rotatebox[origin=c]{90}{10k training samples} & 
    \begin{minipage}{0.44\textwidth}\includegraphics[width=\linewidth]{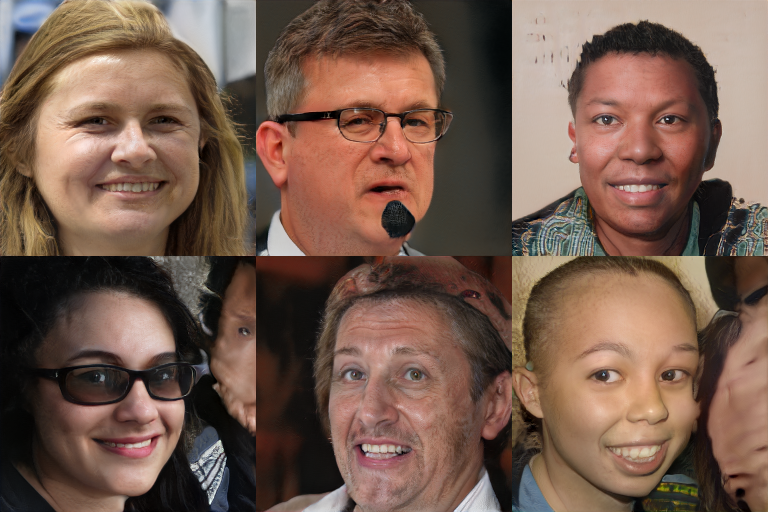}\end{minipage} & 
    \begin{minipage}{0.44\textwidth}\includegraphics[width=\linewidth]{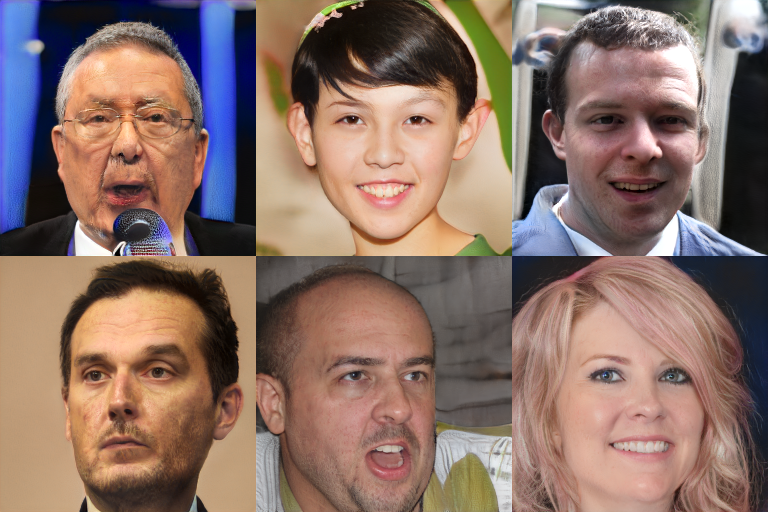}\end{minipage} \\
    \\

    \rotatebox[origin=c]{90}{30k training samples} & 
    \begin{minipage}{0.44\textwidth}\includegraphics[width=\linewidth]{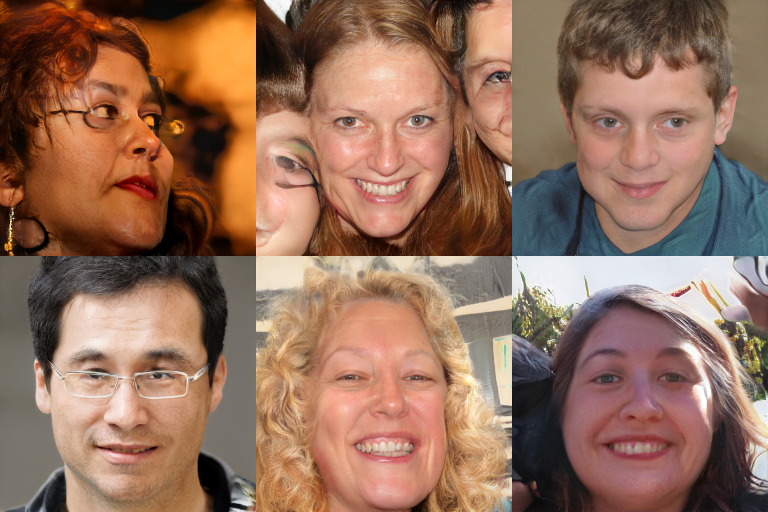}\end{minipage} & 
    \begin{minipage}{0.44\textwidth}\includegraphics[width=\linewidth]{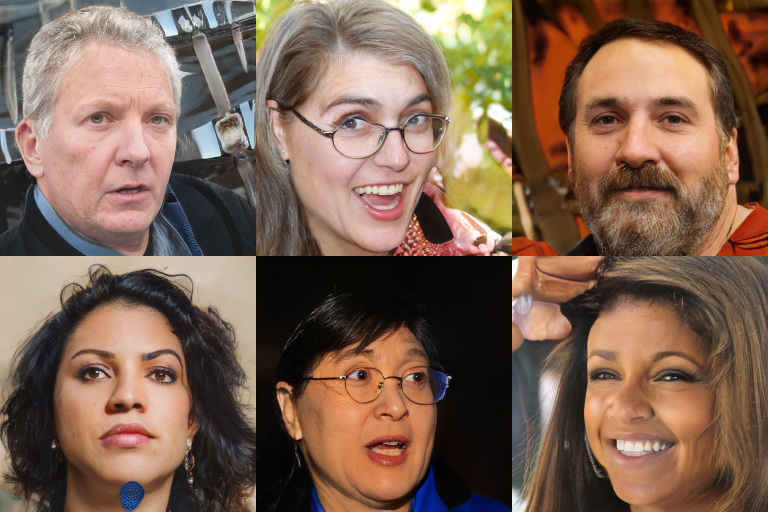}\end{minipage} \\
    \\

\end{tabular}
\caption{Comparison of random generated images between StyleGAN2 + DiffAugment and AugSelf-StyleGAN2+ on FFHQ with different amounts of training data.}
\label{fig:ffhq}
\end{figure}

\begin{figure}[t]
\centering
\begin{tabular}{ccc}
    & StyleGAN2 + DiffAugment & AugSelf-StyleGAN2+ \\
    \\
    \rotatebox[origin=c]{90}{1k training samples} & 
    \begin{minipage}{0.44\textwidth}\includegraphics[width=\linewidth]{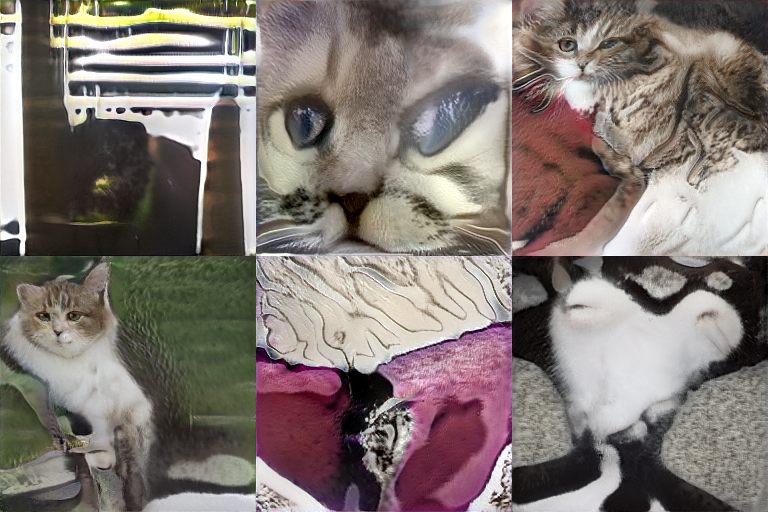}\end{minipage} & 
    \begin{minipage}{0.44\textwidth}\includegraphics[width=\linewidth]{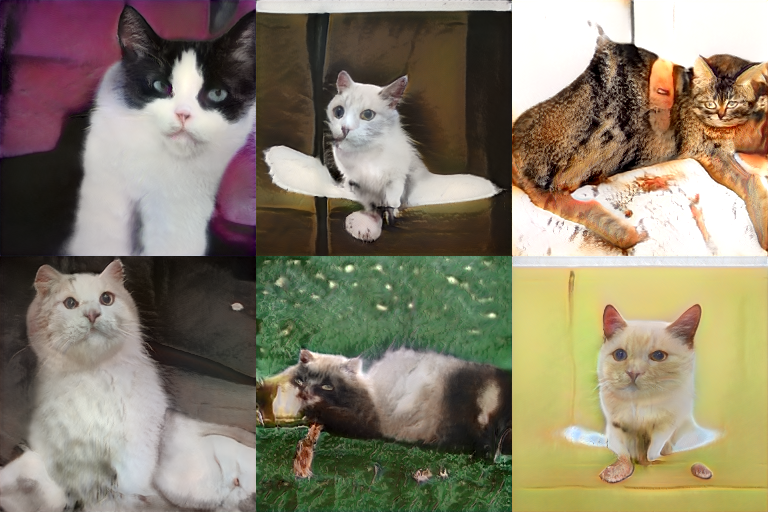}\end{minipage} \\
    \\

    \rotatebox[origin=c]{90}{5k training samples} & 
    \begin{minipage}{0.44\textwidth}\includegraphics[width=\linewidth]{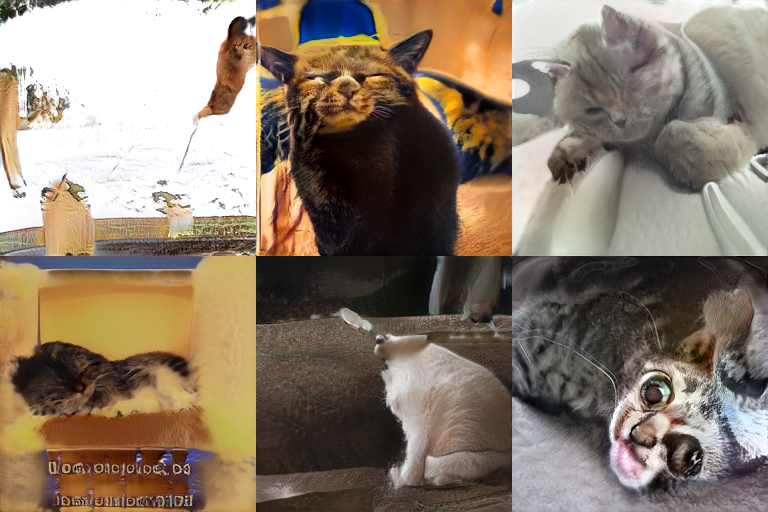}\end{minipage} & 
    \begin{minipage}{0.44\textwidth}\includegraphics[width=\linewidth]{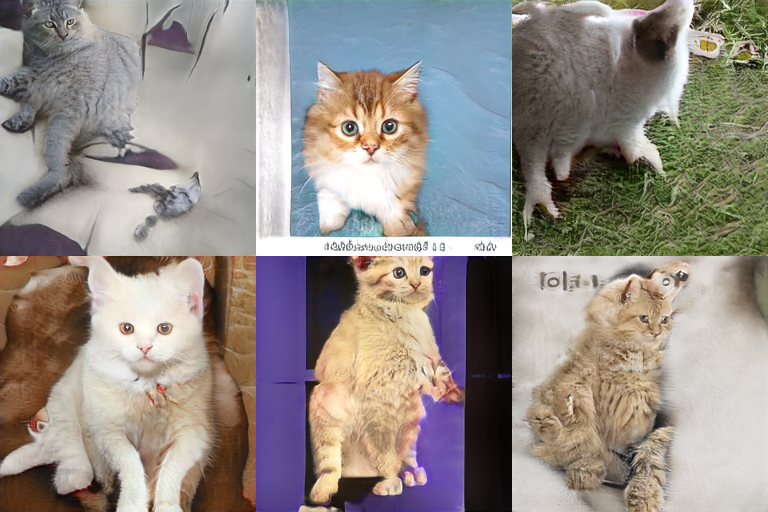}\end{minipage} \\
    \\

    \rotatebox[origin=c]{90}{10k training samples} & 
    \begin{minipage}{0.44\textwidth}\includegraphics[width=\linewidth]{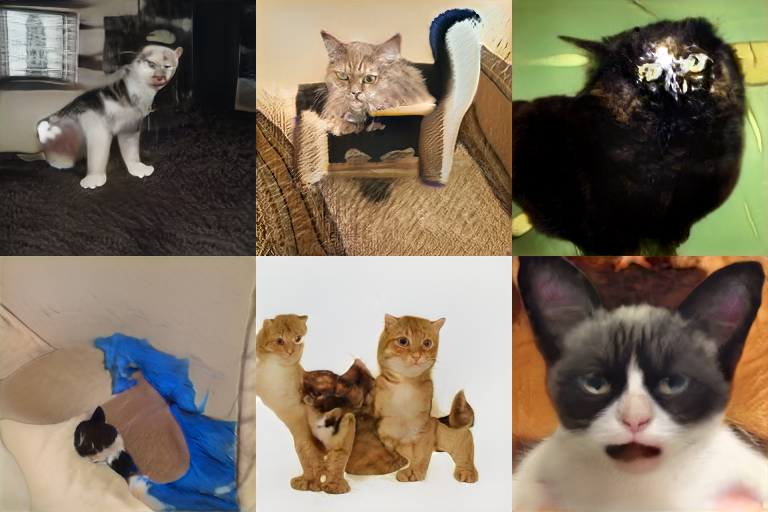}\end{minipage} & 
    \begin{minipage}{0.44\textwidth}\includegraphics[width=\linewidth]{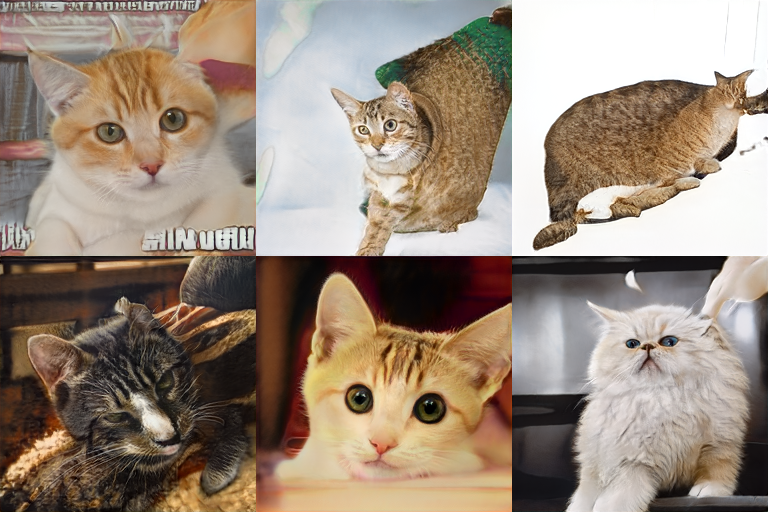}\end{minipage} \\
    \\

    \rotatebox[origin=c]{90}{30k training samples} & 
    \begin{minipage}{0.44\textwidth}\includegraphics[width=\linewidth]{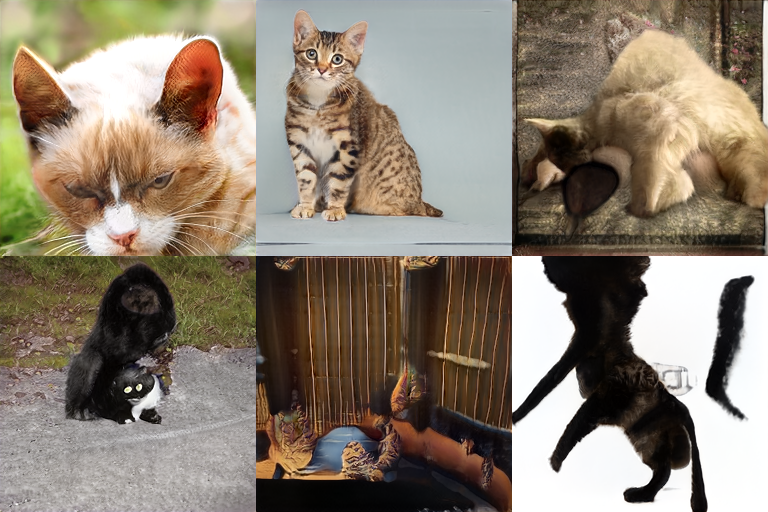}\end{minipage} & 
    \begin{minipage}{0.44\textwidth}\includegraphics[width=\linewidth]{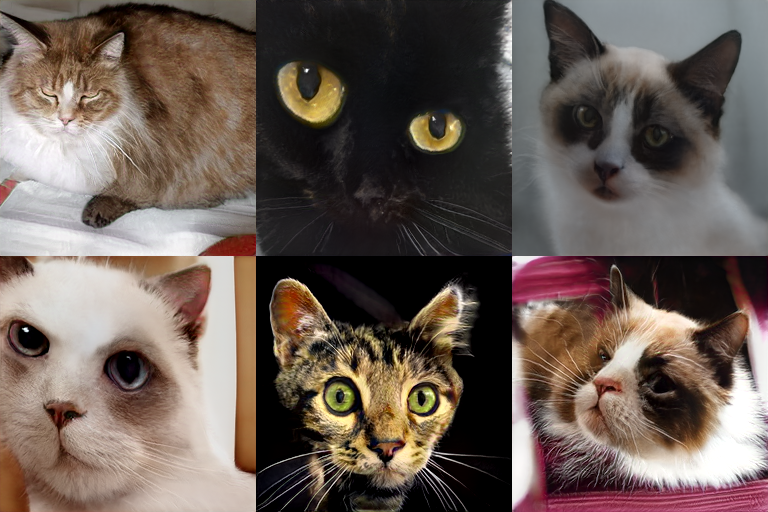}\end{minipage} \\
    \\

\end{tabular}
\caption{Comparison of randomly generated images between StyleGAN2 + DiffAugment and AugSelf-StyleGAN2+ on LSUN-Cat with different amounts of training data.}
\label{fig:lsuncat}
\end{figure}

\end{document}